\documentclass{article}
\usepackage{microtype}
\usepackage[dvipsnames,table]{xcolor}
\usepackage{graphicx}
\usepackage[bibliography=common,appendix=append]{apxproof}

\usepackage{subfigure}
\usepackage{booktabs}

\usepackage{hyperref}
\usepackage[accepted]{icml2024}

\usepackage{amsmath}
\usepackage{amssymb}
\usepackage{mathtools}
\usepackage{amsthm}
\usepackage[normalem]{ulem}

\usepackage{emile}
\usepackage{enumitem}
\setlist[enumerate]{noitemsep, topsep=0pt}

\usepackage[capitalize,noabbrev]{cleveref}

\newcommand{\Wdim}{{n}}

\newcommand{\lengthcube}{2}

\newcommand{\implicant}{{\bw_D}}
\newcommand{\altimplicant}{{\bw_E}}

\newcommand{\indep}{{\perp \!\!\! \perp}}
\newcommand{\independent}{{\distributions^\indep}}
\newcommand{\Pimplicant}{{C_{\implicant}}}
\newcommand{\cell}{{\stackrel{\circ}{C}}}
\newcommand{\cellint}{{\stackrel{\circ}{I}}}
\newcommand{\cellimplicant}{\cell_{\implicant}}
\newcommand{\Paltimplicant}{{C_{\altimplicant}}}
\newcommand{\cubepossible}{{C_{\knowledge{\by}}}}
\newcommand{\possindependent}{{\distributions^ \indep_{\knowledge{\by}}}}
\newcommand{\possmix}[1]{{\mathcal{P}_{k, \knowledge{\by}}}}
\newcommand{\distributions}{{\Delta}}

\newcommand{\poss}[1]{{\Delta_{\knowledge{#1}}}}

\newcommand{\knowledgevar}{\knowledge{\by}=1}
\newcommand{\varprob}{{\mu}}
\newcommand{\probs}{{\boldsymbol{\mu}}}
\newcommand{\prob}{{\mu}}

\theoremstyle{plain}
\newtheoremrep{theorem}{Theorem}[section]
\newtheorem{proposition}[theorem]{Proposition}
\newtheorem{lemma}[theorem]{Lemma}

\theoremstyle{definition}
\newtheorem{definition}[theorem]{Definition}

\theoremstyle{remark}

\theoremstyle{definition}
\newtheorem{example}[theorem]{Example}

\usepackage{todonotes}
\makeatletter
\newcommand*\iftodonotes{\if@todonotes@disabled\expandafter\@secondoftwo\else\expandafter\@firstoftwo\fi}
\makeatother

\definecolor{edolime}{rgb}{0.9,1,0.3}

\title{On the Independence Assumption in Neurosymbolic Learning}

\begin{document}

\twocolumn[
\icmltitle{On the Independence Assumption in Neurosymbolic Learning}

\begin{icmlauthorlist}
	\icmlauthor{Emile van Krieken}{yyy}
	\icmlauthor{Pasquale Minervini}{yyy}
        \icmlauthor{Edoardo M. Ponti}{yyy}
        \icmlauthor{Antonio Vergari}{yyy}
\end{icmlauthorlist}
	
\icmlaffiliation{yyy}{School of Informatics, University of Edinburgh}

\icmlcorrespondingauthor{Emile van Krieken}{Emile.van.Krieken@ed.ac.uk}

\icmlkeywords{Neurosymbolic AI, Probabilistic Reasoning, Topology, optimisation}

\vskip 0.3in
]
\printAffiliationsAndNotice{} 

\begin{abstract}
	State-of-the-art neurosymbolic learning systems use probabilistic reasoning to guide neural networks towards predictions that conform to logical constraints over symbols. 
	Many such systems assume that the probabilities of the considered symbols are conditionally independent given the input to simplify learning and reasoning. 
	We study and criticise this assumption, highlighting how it can hinder optimisation and prevent uncertainty quantification. 
        We prove that loss functions bias conditionally independent neural networks to become overconfident in their predictions. As a result, they are unable to represent uncertainty over multiple valid options.  
	Furthermore, we prove that these loss functions are difficult to optimise: they are non-convex, and their minima are usually highly disconnected.
        Our theoretical analysis gives the foundation for replacing the conditional independence assumption and designing more expressive neurosymbolic probabilistic models.

\end{abstract}

\section{Introduction}
\emph{Neurosymbolic learning} studies neurosymbolic models that combine neural perception and symbolic reasoning \citep{manhaeveNeuralProbabilisticLogic2021,xuSemanticLossFunction2018,badreddineLogicTensorNetworks2022}. These models use logical constraints and data to create a loss function for learning neural perception models \citep{giunchigliaDeepLearningLogical2022}. When used effectively, neurosymbolic learning methods can use these constraints to %
improve data efficiency.
However, researchers in the neurosymbolic learning community have found optimising the parameters of the perception models challenging~\citep{marconatoNeuroSymbolicReasoningShortcuts2023,vankriekenAnalyzingDifferentiableFuzzy2022,manhaeveNeuralProbabilisticLogic2021}. A major underlying reason is that neurosymbolic learning cannot provide exact feedback on how the neural perception model should behave. We highlight this issue with a simple example.

\begin{figure}[t]
	\centering
	\includegraphics[width=\linewidth]{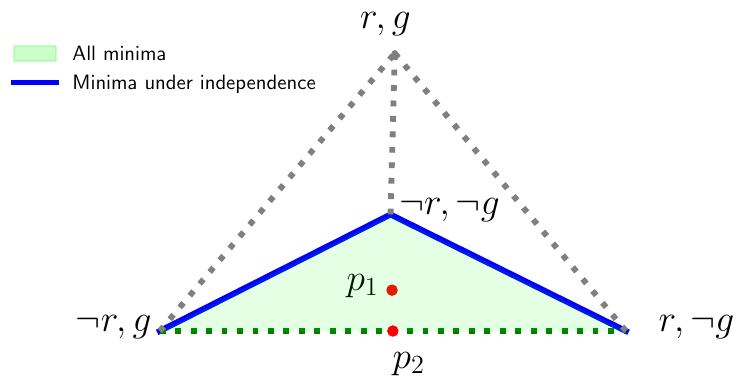}
	\caption{\textbf{The conditional independence assumption discards valid and potentially meaningful solutions.} 
	The tetrahedron (a 3-dimensional probability simplex) represents the distributions over the options of the problem in Example~\ref{ex:traffic-lights}: $r$ refers to the red light and $g$ to the green light. The green triangle represents distributions that assign zero probability to $r\wedge g$. The blue lines are the distributions in the green triangle that an independent distribution can represent. The left (resp. right) blue line represents the distributions where the probability of $r$ (resp. $g$) is zero. Independent distributions cannot represent distributions in the dotted green line, such as $p_2$ that assigns equal probability to only the green or only the red light being on.%
    \scalebox{0.01}{minima immoralia}
 } 
	\label{fig:intro-fig}
\end{figure}

\begin{example}
	\label{ex:traffic-lights}
	We consider a perception model responsible for recognising the red and green lights on a traffic light. It sees a traffic light that it believes to be simultaneously red and green. A constraint specifies this is impossible, and the neurosymbolic loss should penalise this. There are three possible worlds: the model can output that the red light is on, the green light is on, or neither. How do we choose among these?
\end{example}
The set of all beliefs can be represented by the tetrahedron in Figure \ref{fig:intro-fig}. We argue that the perception model should be able to express \emph{uncertainty} over these three worlds, as there is no evidence to conclude which one is correct. This corresponds to the distributions in the green triangle at the bottom of the figure. We should leave determining any further preference to the provided data as the constraint only specifies what worlds are \emph{possible} but does not specify which one is \emph{correct}. 

The majority of probabilistic methods for neurosymbolic learning rely on a strong assumption: namely, that the different symbols of the world are \emph{independent} when conditioned on input data \citep{manhaeveDeepProbLogNeuralProbabilistic2018,xuSemanticLossFunction2018}. This means that for some input image, a conditionally independent perception model predicts two probabilities: one for the green light being on and one for the red light being on. 

What do we lose when we take this conditional independence assumption? There is recent experimental evidence that suggests that using \emph{expressive} perception models over conditionally independent ones improves performance on neurosymbolic tasks \citep{ahmedSemanticProbabilisticLayers2022,ahmedPseudosemanticLossAutoregressive2023,pryorNeuPSLNeuralProbabilistic2023}. We theoretically justify these results: the conditional independence assumption causes neurosymbolic methods to be biased towards deterministic solutions. 
This is because minima of neurosymbolic losses have to deterministically assign values to some variables. For instance, in Figure \ref{fig:intro-fig}, the blue lines represent distributions that state that either the red light is off or the green light is off.

With the goal of better understanding the impact of the conditional independence assumption, we provide a computable characterisation of what can be represented. %
We find that we can characterise this problem faithfully using tools from logic \citep{quineCoresPrimeImplicants1959} and computational homology \citep{kaczynskiComputationalHomology2004}, and prove that this bias towards determinism holds generally. Furthermore, our characterisation shows that the conditional independence assumption can lead to training objectives %
that are challenging to optimise due to heavily disconnected minima. %
Our analysis provides theoretical justifications for the benefits of using more expressive perception models: the ability to properly express uncertainty and smooth, convex loss landscapes.

\section{Background and Notation}
\label{sec:background}
\textbf{Probabilistic Neurosymbolic Learning.} 
We consider a probabilistic neurosymbolic learning (PNL) setting, where a probabilistic neural perception model $p_{\btheta}(\bw|\bx)$ with parameters $\btheta$ defines a distribution over \emph{worlds} $\bw \in \{0, 1\}^n$ (often called \emph{concepts} \citep{barbieroInterpretableNeuralSymbolicConcept2023,marconatoNeuroSymbolicReasoningShortcuts2023}) given high-dimensional inputs $\bx\in \mathcal{X}$.
A constraint $\knowledge{\by}: \{0, 1\}^n\rightarrow \{0, 1\}$ %
is a boolean function on worlds $\bw$. We say a world is \emph{possible} if $\knowledge{\by}(\bw)=1$ and assume $\knowledge{\by}$ has at least one possible world. 
The next example illustrates this setting.
\begin{example}[Learning with algorithms]
	\label{ex:mnist-add}
	MNIST Addition is a popular benchmark task in neurosymbolic learning \citep{manhaeveNeuralProbabilisticLogic2021}. $\mathcal{X}$ is the set of pairs of MNIST images. We represent worlds $\bw$ with $\Wdim=20$ variables $\{w_{1, 0}, ..., w_{1, 9}, w_{2, 0}, ..., w_{2, 9}\}$, where $w_{i,j}$ denotes the $i$th digit taking the value $j$.
	We have a set of labels representing possible sums $\mathcal{Y} =\{0, \ldots, 18\}$. The constraints $\knowledge{\by}_y$ enforce that exactly one of $w_{1, j}$ and one of $w_{2, k}$ is true, and ensures the pair of digits sums to the correct output: $\knowledge{\by}_{y}(\bw)=\exists_{j, k \in \{0, ..., 9\}}(j+k = y) \wedge w_{1, j} \wedge w_{2, k}$. Here, the constraints $\knowledge{\by}_y$ are parameterised by an observed output $y\in\mathcal{Y}$, which changes between inputs $\bx$.	
\end{example}

We compute the probability that the model $p_\btheta(\bw|\bx)$ satisfies the constraint $\knowledge{}$\footnote{With abuse of notation, we use the symbol $\knowledge{}$ both for the boolean function encoding the knowledge and for a binary random variable of the knowledge being true or not.} for input $\bx$ with:

\begin{equation}
	\label{eq:wmc}
	p_{\btheta}(\knowledgevar| \bx)=\sum_{\bw\in \{0, 1\}^n} p_{\btheta}(\bw|\bx) \knowledge{\by}(\bw).%
\end{equation}

Equation \ref{eq:wmc} is known as the (conditional) \emph{weighted model count (WMC)} in probabilistic and logical reasoning \citep{chaviraProbabilisticInferenceWeighted2008}. %
The $p_\btheta(\bw|\bx)$ term can be understood as a data-dependent factor that assigns probabilities to different worlds, while the $\knowledge{\by}(\bw)$ term is a constraint-dependent factor that filters out impossible worlds. %
Most loss functions based on WMC \citep{xuSemanticLossFunction2018,manhaeveNeuralProbabilisticLogic2021} minimise the negative logarithm of the WMC $\mathcal{L}(\btheta; \bx)=-\log p_\btheta(\knowledgevar| \bx)$, often called the \emph{semantic loss}.
See \cref{sec:related-work} for a discussion on these methods.

The majority of current PNL approaches assume that the probabilities $p_{\btheta}(w_i=1|\bx)$ of variables $w_i$ being true are independent when conditioned on $\bx$. Then, perception models only have to predict $n$ parameters in $[0, 1]^n$ instead of a parameter for each of the $2^n$ worlds, i.e., 
\begin{equation}
	\label{eq:indep-assumption}p_{\btheta}(\bw|\bx):=\prod_{i=1}^\Wdim p_\btheta(w_i|\bx).
\end{equation}
We call this the \emph{(conditional) independence assumption}.\footnote{In the rest of the paper, we refer to this \emph{conditional} independence assumption as just \say{the independence assumption} for readability.} 
PNL systems take advantage of this assumption 
to speed up inference, reduce the number of trainable parameters, and ease implementation \citep{xuSemanticLossFunction2018,manhaeveNeuralProbabilisticLogic2021,vankriekenANeSIScalableApproximate2023,ahmedSemanticProbabilisticLayers2022}.

\begin{example}[Semi-supervised learning with constraints]
	A common application of neurosymbolic learning is semi-supervised learning of the perception model $p_\btheta$. Here, we have a labelled dataset $\mathcal{D}_l=\{(\bx_i, \bw_i)\}_{i=1}^{|\mathcal{D}_l|}$ and an unlabelled dataset $\mathcal{D}_u=\{\bx_i\}_{i=1}^{|\mathcal{D}_u|}$. Here, $\knowledge{}$ is a conjunction of a set of constraints $\phi_i$ that relate the symbols in the world $\bw$. The semantic loss $\mathcal{L}(\btheta)$ over the unlabelled data $\mathcal{D}_u$ is often added as a regularisation term to a supervised loss function to bias the neural network towards solutions that predict possible worlds \citep{xuSemanticLossFunction2018}. %
\end{example}

\section{The issues with the independence assumption}
\label{sec:independence}
The main problem in our setting is how to learn the perception model $p_\btheta(\bw|\bx)$. The underlying assumption in neurosymbolic learning is that there is a true distribution over worlds $p^*(\bw|\bx)$. 
This distribution is unknown, and we cannot directly sample from $p^*(\bw|\bx)$ to train $p_\btheta(\bw|\bx)$. Instead, the feedback neurosymbolic learning methods provide is through the constraint $\knowledge{\by}$, which induces a \emph{set of possible worlds} 
$\mathcal{W}_{\knowledge{\by}}=\{\bw\in \{0, 1\}^n \mid \knowledge{\by}(\bw) = 1\}$. 
If the constraint is correct, then all worlds $\bw$ with non-zero probability $p^*(\bw|\bx)$ are in $\mathcal{W}_{\knowledge{\by}}$. 
Therefore, neurosymbolic learning methods should use the constraint $\knowledge{\by}$ as a \emph{filter} on what worlds are possible.
We aim to answer the following questions:
can particular parameterisations of $p_\btheta(\bw|\bx)$ implicitly bias the selection of possible worlds instead of just filtering out impossible ones? And if so, how does this hinder their ability to recover $p^*(\bw|\bx)$ via learning?

\subsection{The independence assumption biases towards deterministic solutions}
\label{sec:indepence}

\begin{figure}
	\centering
	\includegraphics[width=0.6\linewidth]{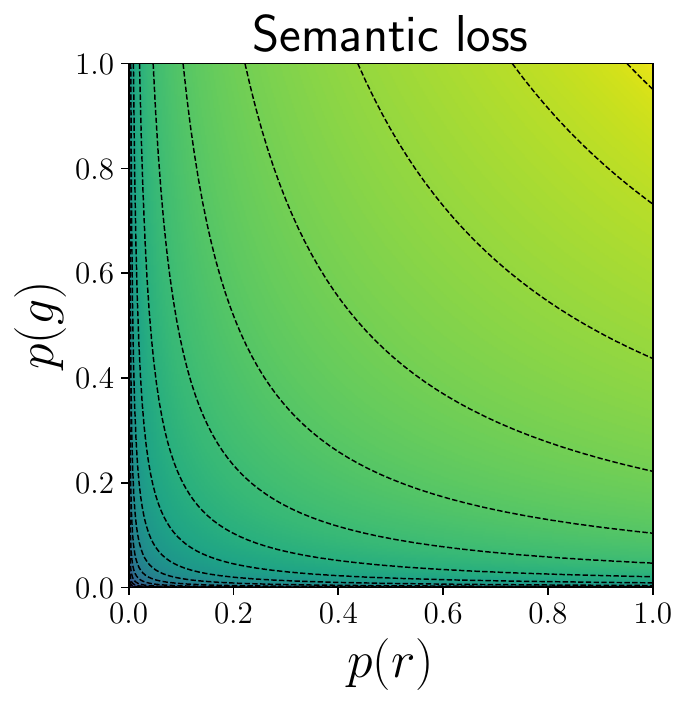}
	\caption{The loss landscape of the semantic loss for the traffic light problem -- brighter (resp. darker) regions correspond to higher (resp. lower) semantic loss values.} 
	\label{fig:semantic-loss-landscape}
\end{figure}

First, we show that common neurosymbolic learning methods are biased towards deterministic solutions. Returning to Example~\ref{ex:traffic-lights}, consider a simple setup with $\bw$ consisting of two binary variables $r$ and $g$ representing a red and green light, and a constraint $\knowledge{}=\neg r \vee \neg g$ which asserts that the red and green lights cannot be on simultaneously. 

In the remainder of the paper, we will fix the input $\bx$ and keep it implicit in our notation unless necessary. Using our formula $\knowledge{}$ in Equation \ref{eq:wmc}, we then get:\footnote{With some abuse of notation, we consider $r$ and $\neg g$ as events, that is, $p(r, \neg g):= p(r=1, g=0)$.}

\begin{equation} \label{eq:traffic-lights}
\begin{aligned}
	p_{\btheta}(\knowledgevar)&=p_{\btheta}(\neg r, \neg g) + p_{\btheta}(\neg r, g) + p_{\btheta}(r, \neg g) \\
	&= 1 - p_{\btheta}(r, g).
\end{aligned}
\end{equation}

We can maximise this probability by simply enforcing $p_\btheta(r, g)=0$. Then, the distribution over the remaining worlds can be arbitrary -- such distributions are represented by the green triangle in Figure \ref{fig:intro-fig}. However, taking the independence assumption over variables, we get:

\begin{equation*}
	p_{\btheta}(\knowledgevar)= 1 - p_\btheta(r) \cdot p_\btheta(g). 
\end{equation*}

We plot the semantic loss for an independent distribution in Figure \ref{fig:semantic-loss-landscape} as a function of $p_\btheta(r)$ and $p_\btheta(g)$. The semantic loss has its minima at the lines $p_\btheta(r)=0$ and $p_\btheta(g)=0$, biasing the model towards deterministically choosing either the red or green light being off, even though there is no evidence available to conclude this. Therefore, when optimising this function, we will come to a deterministic conclusion, which is wrong in a fraction of cases that depends on the real-world distribution of red and green lights being on.

Furthermore, an independent distribution cannot represent the beliefs $p_1$ and $p_2$ highlighted in Figure \ref{fig:intro-fig}, where $p_1$ is the uniform belief over the three possible worlds, while $p_2$ is the equal belief in only the red light being on or only the green light being on. In fact, we cannot represent any distribution that assigns a non-zero probability to either just the red or the green light being on: if the semantic loss is minimised, then an independent distribution cannot represent uncertainty among multiple equally valid options.

Does this bias towards determinism happen for all formulas $\varphi$? We prove that this is indeed the case using the concept of \emph{implicants} \citep{quineCoresPrimeImplicants1959}: an implicant assigns values to a subset of the variables $\{w_i\}_{i=0}^n$ such that it ensures the constraint $\knowledge{\by}$ is true. In our example, $\neg r$ is an implicant of $\knowledge{\by}$, since both $\neg r \wedge g$ and $\neg r \wedge \neg g$ are possible worlds. Our first theorem, which is formalised and proven in Section \ref{sec:possible-distributions}, generalises this result to all formulas $\knowledge{\by}$:

\begin{theorem}[Implicants determine minima, informal]
	\label{thm:implicants_informal}
	An independent distribution $p_\btheta(\bw)$ minimises the semantic loss if and only if it is deterministic for some variables, and those variables form an implicant of $\knowledge{\by}$.
\end{theorem}

We can, therefore, use the logical concept of implicants to study to what optima independent distributions converge. If the implicants are very restrictive, this greatly decreases the number of minima. The more restrictive the implicants of the formula are, the more the independent distributions will be biased towards deterministic solutions, and the less they will be able to quantify uncertainty.

\subsection{Minima under independence assumption are non-convex and disconnected}
Using the connection to implicants from Theorem \ref{thm:implicants_informal}, we develop a geometric characterisation of the independent distributions that minimise the semantic loss.
We emphasise that we study convexity and connectedness in the space of probability vectors over worlds, and not in the space of the parameters of the model $\btheta$. Our characterisation allows for an in-depth study of its topology using the tools of computational homology \citep{kaczynskiComputationalHomology2004}. This allows us to give the exact conditions on the constraints $\knowledge{\by}$ for which the minima are convex and connected. These conditions are valid only when we severely limit the types of constraints  we can use. Therefore, the resulting semantic loss functions are usually highly non-convex and disconnected, and so difficult to optimise. In contrast, for expressive distributions, the semantic loss is always convex, as we show in Section \ref{sec:geometric}.

\begin{theorem}[Convexity, informal]
	\label{thm:convexity_informal}
	The semantic loss is convex over the set of independent distributions only when the constraint $\knowledge{\by}$ is a formula of the form $\bigwedge_{i=1}^L l_i$, where each $l_i$ is a literal (a variable or its negation).
\end{theorem}
We discuss this in detail in Section \ref{sec:convexity}. The intuition behind this result is that such formulas provide direct supervision on $L$ variables, and give no supervision on the remaining $n-L$ variables. Therefore, the loss function is convex over the $L$ variables, and the remaining $n-L$ variables can be chosen arbitrarily. Note that this is a very restrictive condition: In many neurosymbolic settings, we have no direct supervision, and the constraint just acts as a filter on what worlds are possible.

\begin{theorem}[Connectedness, informal]
	\label{thm:connectedness_informal}
	The independent distributions that minimise the semantic loss are connected only if the implicants of the constraint $\knowledge{\by}$ form a connected graph between worlds.
\end{theorem}
In Section \ref{sec:connectedness}, we define a graph where the vertices are the possible worlds $\mathcal{W}_{\knowledge{\by}}$ that are connected when there is an implicant that \say{covers} both worlds. If this graph is connected, then so are the minima. This result is quite abstract, so we provide two examples to illustrate it. For the traffic light example, the graph is connected: The three possible worlds are connected through the implicants $\neg r$ and $\neg g$. However, for the MNIST Addition task (Example \ref{ex:mnist-add}), the graph contains no edges at all, which implies that the minima are a set of disconnected vertices.

\section{Characterising minima of the semantic loss}
In this section, we will develop the mathematical machinery to be able to characterise what it means for a distribution to be a minimum of the semantic loss, and, in particular, for independent distributions. 

Section \ref{sec:expressiveness} discusses the expressivity of distributions and introduces our notation. Section \ref{sec:possible-distributions} characterises the minima of the semantic loss for independent distributions. Section \ref{sec:representation} studies a minimal representation of those minima. Finally, Section \ref{sec:convexity} shows when this set is convex, and Section \ref{sec:connectedness} when this set is connected. Both turn out to be very rare.
We provide proof sketches for most theorems in the main text and leave the full proofs to Appendix \ref{appendix:proofs}. For ease of reading, we provide a table of notation in Table~\ref{tab:notation}.  

\subsection{Expressive distributions} 
\label{sec:expressiveness}
The \emph{expressiveness} of a perception model $p_\btheta(\bw|\bx)$ \citep{ahmedSemanticProbabilisticLayers2022} intuitively refers to how many distributions over worlds it can represent. A fully expressive perception model can represent any distribution $p(\bw|\bx)$. %

The set of all joint distributions over worlds is the $(2^\Wdim-1)$-(probability) simplex having the standard unit vectors $\mathbf{e}_i$ as vertices: 
$\distributions=\{\sum_{i=1}^{2^n}\alpha_i \mathbf{e}_i: \sum_{i=1}^{2^n} \alpha_i=1, \boldsymbol{\alpha} \geq \boldsymbol{0}\}\subset \mathbb{R}^ {2^ n}$.
The vertices $\mathbf{e}_i$ are one-hot representations of the worlds $\bw_i\in \{0, 1\}^\Wdim$. We fix an arbitrary ordering $\bw_1, ..., \bw_{2^n}$ throughout.
All probability distributions $p$ considered in this paper then correspond to the vector $(p(\bw_1), \ldots, p(\bw_{2^n}))$ in $\distributions$. 
With abuse of notation, we say $p\in \distributions$, referring to $p$ as both this vector and a distribution. 

We define \emph{possible distributions} $p\in \distributions$ as a distribution that assigns all probability mass to possible worlds. An equivalent statement is that $p(\bw)=0$ for all impossible worlds $\bw\in \{0, 1\}^n\setminus \mathcal{W}_{\knowledge{\by}}$  \citep{marconatoNotAllNeuroSymbolic2023}.
The \emph{set of all possible distributions} $\poss{\by} \subseteq \distributions$ is a $(|\mathcal{W}_{\knowledge{\by}}|-1)$-simplex formed from the standard unit vectors associated with the possible worlds $\mathcal{W}_{\knowledge{\by}}$. Since $\poss{\by}$ is a simplex, it is a convex set. Furthermore, the semantic loss $\mathcal{L}(p)$ is convex over the set of distributions $\distributions$ since the WMC is linear (see Appendix \ref{appendix:convexity} for a proof).

An \emph{expressive parameterisation} $\btheta \mapsto p_\btheta$ of the joint distribution can represent any distribution $p  \in \distributions$: For each input $\bx\in \mathcal{X}$, there is a parameter $\btheta$ such that $p(\bw)=p_{\btheta}(\bw|\bx)$. 
A (parameter-inefficient) fully expressive parameterisation is to predict a vector of $2^n$ logits, which is then mapped to $\distributions$ via softmax. %
Expressive parameterisations behave quite differently in the example discussed in Section \ref{sec:indepence}. They can minimise the probability of the constraint $\knowledge{\by}$ in Equation \ref{eq:traffic-lights} by simply setting $p_\btheta(r, g)=0$, and model any preference over the remaining three worlds. This prevents the model from having to deterministically choose that either the red or green light is off and allows it to represent uncertainty.

\subsection{When do independent parameterisations satisfy the constraint?}
\label{sec:possible-distributions}
We next study the properties of independent distributions that satisfy the constraint. 
An independent distribution $p_\probs$ is characterised by parameters $\probs$\footnote{If the perception model $p_{\btheta}$ is a neural network, $\probs$ would be the output of its last layer.
We drop the reference to $\btheta$ as it is not relevant for our analysis.
} 
in the $n$-hypercube $[0, 1]^ n$, where $\varprob_i$ is the probability that $w_i$ is true. To be precise, $p_\probs(w_i)=\mu_i^{w_i}\cdot(1-\mu_i)^{1-w_i}$ is the probability mass function of the Bernoulli random variable $w_i$. We now formally define \emph{implicants} \citep{quineCoresPrimeImplicants1959}, which are related to the deterministic components of $\probs$: 
\begin{definition}[Deterministic assignments]
	\label{def:detassign}
	A probability $\varprob_i\in [0, 1]$ is \emph{deterministic} if $\varprob_i\in \{0, 1\}$. Otherwise, $\varprob_i$ is \emph{stochastic}, that is, $\varprob_i\in (0, 1)$.
	A \emph{partial assignment} $\implicant=\{w_i\}_{i\in D}$ assigns values $\{0, 1\}$ to a subset of the variables $\bw$ indexed by $D\subseteq \{1, \ldots, \Wdim\}$. The \emph{deterministic assignment} of an independent distribution $p_\probs$ is the partial assignment $\implicant=\{\varprob_i\}_{i\in D}$ defined by its deterministic factors $D=\{i|\varprob_i\in \{0, 1\}\}$. %
 \end{definition}

 \begin{definition}[Implicants]
	\label{def:implicant}
	We define the \emph{cover} $\mathcal{W}_\implicant\subseteq \{0, 1\}^n$ of a partial assignment $\implicant$ as the set that contains all worlds $\bw\in \{0, 1\}^n$ that equal $\implicant$ on the variables in $D$.
	A partial assignment $\implicant$ is an \emph{implicant} of $\knowledge{\by}$ if its cover only contains possible worlds. That is, $\implicant \models \knowledge{\by}$.
\end{definition}

Intuitively, an implicant assigns values to a subset of the variables in $\bw$ such that it ensures the constraint $\knowledge{\by}$ is true. For the traffic lights example, the partial assignment $\neg r$ is an implicant of $\knowledge{\by}$, since its cover ($\neg r \wedge g$ and $\neg r \wedge \neg g$) only contains possible worlds.
Our first result states that if we have an implicant, we can easily create possible independent distributions: Use the implicant as the deterministic part, and assign any value in $[0, 1]$ to the remaining factors. This is because, for implicants, the value of the other variables \say{does not matter} to the constraint $\knowledge{\by}$.

\begin{theorem}[Implicants determine possible independent distributions]
	\label{thm:independence}
	Let $p_\probs$ be an independent distribution over worlds. Let $\implicant$ be $p_\probs$'s deterministic assignment. Then $p_\probs$ is possible for $\knowledge{\by}$ if and only if  $\implicant$ is an implicant of $\knowledge{\by}$.
\end{theorem}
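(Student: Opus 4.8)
The plan is to reduce the biconditional to a single structural identity about the \emph{support} of an independent distribution, after which the definitions do all the remaining work. Recall two facts that are already available. By the characterisation of possible distributions in Section~\ref{sec:expressiveness}, $p_\probs$ is possible for $\knowledge{\by}$ exactly when every world in its support lies in $\mathcal{W}_{\knowledge{\by}}$; and by definition $\implicant$ is an implicant of $\knowledge{\by}$ exactly when its cover $\mathcal{W}_\implicant$ is contained in $\mathcal{W}_{\knowledge{\by}}$. Hence both sides of the theorem are conditions of the form ``a certain set of worlds is a subset of $\mathcal{W}_{\knowledge{\by}}$'', and they coincide as soon as I show that these two sets of worlds are the same, i.e.
\begin{equation*}
	\operatorname{supp}(p_\probs) = \mathcal{W}_\implicant.
\end{equation*}

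To establish this identity I would argue two inclusions directly from the product form $p_\probs(\bw)=\prod_{i=1}^\Wdim \varprob_i^{w_i}(1-\varprob_i)^{1-w_i}$. For $\operatorname{supp}(p_\probs)\subseteq\mathcal{W}_\implicant$, I note that positivity of a product forces every factor to be nonzero, so a deterministic coordinate $i\in D$ with $\varprob_i=0$ must have $w_i=0$ and one with $\varprob_i=1$ must have $w_i=1$; thus any $\bw$ in the support agrees with $\implicant$ on $D$ and lies in the cover. For the reverse inclusion $\mathcal{W}_\implicant\subseteq\operatorname{supp}(p_\probs)$, I would check that any $\bw$ agreeing with $\implicant$ on $D$ has strictly positive probability: the stochastic factors ($i\notin D$, $\varprob_i\in(0,1)$) are positive for either value of $w_i$, and each deterministic factor evaluates to $1$.

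I expect the proof to be short, so the ``main obstacle'' is really a bookkeeping subtlety rather than a conceptual one: in the reverse inclusion one must fix the convention $0^0=1$ so that a deterministic coordinate matching $\implicant$ contributes the factor $1$ instead of an indeterminate form (e.g.\ when $\varprob_i=0$ and $w_i=0$ the relevant factor is $\varprob_i^{w_i}=0^0$). With that convention in place, the two inclusions combine to give $\operatorname{supp}(p_\probs)=\mathcal{W}_\implicant$, and the biconditional follows immediately from the two set-containment reformulations above. The conceptual takeaway I would emphasise is that stochastic coordinates place no constraint on the support, while the deterministic assignment pins it down to exactly the cover --- which is precisely why only the implicant property of $\implicant$ governs whether $p_\probs$ is possible.
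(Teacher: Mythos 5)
Your proposal is correct and follows essentially the same route as the paper: the paper's proof also hinges on the single observation that, by independence, $\operatorname{supp}(p_\probs)=\mathcal{W}_\implicant$, and then reads off both directions of the biconditional from the support characterisation of possible distributions. You merely spell out the two inclusions (and the $0^0=1$ convention) that the paper asserts in one sentence, which is a harmless elaboration rather than a different argument.
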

\begin{proof}
	By independence of $p_\probs$, the support of $p_\probs$ is the cover $\mathcal{W}_\implicant$ of the deterministic assignment $\implicant$ of $\probs$, and the remaining variables can be assigned any value with some probability.
	Assume $p_\probs$ is possible; then, for each $\bw$ in the support of $p_\probs$, $\bw$ is a possible world. But then each world in the cover $\mathcal{W}_\implicant$ of $\implicant$ is possible, and so $\implicant$ is an implicant.
	Next, assume $\implicant$ is an implicant. Then, each world in the cover of $\implicant$ is possible. But this is precisely the support of $p_\probs$. So $p_\probs$ is possible. \qedhere

\end{proof}
The more restrictive the constraint is over what worlds are possible, the more variables the implicants assign values to. Our example contains five implicants: $\neg r \wedge g$, $r \wedge \neg g$, $\neg r \wedge \neg g$, $\neg r$, and $\neg g$. Therefore, the deterministic assignment of $p_\probs$ will need to contain at least one of $\neg r$ and $\neg g$ for $p_\probs$ to be possible. %

\subsection{Conditioned independent distributions}
A common counterargument to the claim that independent distributions are biased towards determinism is that we can condition an independent distribution $p_\probs$ on the constraint $\knowledge{\by}$. This is the distribution $p_\probs(\bw|\knowledgevar)$ where the variables $w_i$ become dependent due to conditioning on the constraint. Furthermore, such a parameterisation is an $n$-dimensional manifold inside the set of possible distributions, which can cover far more distributions than those characterised in Theorem \ref{thm:independence}. For instance, consider the independent distribution $p(r)=p(g)=\frac{1}{2}$. When conditioning on $\varphi=\neg r \vee \neg g$, we get the uniform distribution over worlds $p_1$ from Figure \ref{fig:intro-fig}. In fact, we can cover the entire green triangle in Figure \ref{fig:intro-fig} in this way.

However, this argument does not hold in a \emph{learning} setting where we optimise towards a minimum of the semantic loss (Equation \ref{eq:wmc}). As we already showed, the uniform distribution over worlds $p_1$ can \emph{not} be represented by an independent distribution alone. In fact, there are strict conditions on when an independent distribution $p_\probs$ conditioned on the constraint $\knowledge{\by}$ can be represented by \emph{another} independent distribution $q_{\probs'}$:

\begin{theoremrep}[Representability of $p_\probs(\bw|\knowledgevar)$]
	\label{prop:independent-to-independent-PR}
	Let $p_\probs$ 
 have $p_\probs(\knowledgevar)>0$ 
    and deterministic assignment $\altimplicant$. Then the following statements are equivalent:%
	\begin{enumerate}
		\item the conditional distribution $p_\probs(\bw|\knowledgevar)$ can be represented by another independent distribution $q_{\probs'}$;
		\item there is an implicant $\implicant$ that covers all possible worlds in the support of $p_\probs$;
		\item there is an implicant $\implicant$ such that $\altimplicant, \knowledge{\by}\models \implicant$.
	\end{enumerate}
\end{theoremrep}
\begin{proofsketch}
	Under this condition, we can rewrite the conditional distribution $p_\probs(\bw|\knowledgevar)$ as an independent distribution where the deterministic assignment is $\implicant$, and the remaining factors are renormalised to sum to 1. 
\end{proofsketch}
\begin{appendixproof}
	$2\rightarrow 1$: Assume such an implicant $\implicant$ of $\knowledge{\by}$ exists. Then we rewrite the conditional distribution $p_\probs(\bw|\knowledgevar)$ as %
	\begin{align}
		p_\probs(\bw|\knowledgevar)&=\frac{\prod_{i=1}^\Wdim \prob_i \knowledge{\by}(\bw)}{p_\probs(\knowledgevar)}\\
		&= I[\bw\in \mathcal{W}_\implicant] \frac{\prod_{i\not \in D} \prob_i }{p_\probs(\knowledgevar)} \\
		&= I[\bw\in \mathcal{W}_\implicant] \prod_{i\not \in D} \prob_i p_\probs(\knowledgevar)^{\Wdim - |D|}.
	\end{align} 
	Here, $I[\bw\in \mathcal{W}_\implicant]$ is the indicator function that is 1 when $\bw$ is in the cover of $\implicant$. 
	This is an independent distribution $q_{\probs'}$ with deterministic assignment $\implicant$ and parameters $\prob_i'=\prob_i p_\probs(\knowledgevar)^{\Wdim - |D|}$ for the stochastic variables. In the first step, we used that $\knowledge{\by}(\bw)=0$ exactly when $\bw_D$ differs from the implicant $\implicant$.

	$1\rightarrow 2$: Assume there is no implicant $\implicant$ as described in 2. Then, the deterministic assignment of the conditional distribution $p_\probs(\bw|\knowledgevar)$ is not an implicant, as if it was, we could have constructed such an implicant. But then, by Theorem \ref{thm:independence}, there must be a world $\bw$ in the cover of the deterministic assignment of $q$ that is not possible. Since for independent distributions, any worlds in the cover of the deterministic assignment get positive probability, such an independent distribution must also assign positive probability to this extension, yet $p_\probs(\bw|\knowledgevar)=0$, so $p_\probs(\bw|\knowledgevar)$ cannot be represented by an independent distribution.  %

	$2\rightarrow 3$: Assume all possible worlds in the support of $p_\probs$ are in the cover of the implicant $\implicant$. That means all worlds in the cover of $\altimplicant$ for which the constraint $\knowledge{\by}$ holds are also in the cover of $\implicant$. Therefore, $\altimplicant, \knowledge{\by}\models \implicant$.

	$3\rightarrow 2$: Assume there is an implicant $\implicant$ such that $\altimplicant, \knowledge{\by}\models \implicant$. By independence, the support of $p$ contains the worlds extending $\altimplicant$. By the entailment, its subset of \emph{possible} worlds is those that also extend $\implicant$. %
\end{appendixproof}

This theorem is rather subtle. In our example, an independent distribution with deterministic assignment $g$ \say{entails} $\neg r$: Since $\neg g$ is not true, we need to make $\neg r$ true to be consistent with $\neg r \vee \neg g$. Since $\neg r$ is an implicant, an independent distribution can represent $p_\probs(\bw|\knowledgevar)$. Any distribution with $p_\probs(g)=1$ will have a conditional distribution at the vertex $(\neg r, g)$, which is representable. Therefore, the distributions for which we can represent $p_\probs(\bw|\knowledgevar)$ are those that either have $p_\probs(g)=1$ or $p_\probs(r)=1$ (but not both, since then $p_\probs(\knowledgevar)=0$).
For general functions, this theorem states that a distribution can only represent a conditional independent distribution if the unconditioned distribution is deterministic in some variables.

\subsection{The geometry of sets of possible independent distributions}
\label{sec:geometric}
While the previous section studied possible independent distributions individually, in the following section we study entire sets of possible distributions from a geometric and topological viewpoint. Our main result proves that all possible independent distributions are on a face of the hypercube $[0, 1]^n$ and that we can compute which faces contain possible independent distributions. 

We next define the \emph{set of all independent distributions} $\independent \subseteq \distributions$. We calculate the probability of each world from the parameters $\probs\in [0,1]^n$. Then, we create a vector in the set of all distributions $\distributions$: 
\begin{equation}
	\begin{aligned}
	\label{eq:independentset}
	\independent&= \Big\{p_\probs \in \distributions \mid \probs\in [0, 1]^n, \\
	&{p_\probs}_i =  \prod_{j=1}^\Wdim \varprob_j^{w_{i, j}} \cdot (1-\varprob_j)^{1 - w_{i, j}}  \Big\}
	\end{aligned}
\end{equation}
We consider all parameters of possible distributions in $[0, 1]^n$. Then, we compute the probability of each world $\bw_i$ using Equation \ref{eq:indep-assumption} and the Bernoulli mass function to create a vector in $\distributions$. This map from $[0, 1]^n$ to $\distributions$ is a bijection (see Lemma \ref{lemma:bijection}), and so is a homeomorphism between the $n$-cube $[0, 1]^n$ and $\independent$. In practice, this means we can study topological properties both in parameter space and distribution space. We will treat $p_\probs$ as both a vector in $\distributions$ and a distribution over worlds. 

Independent distributions can only represent a subset of the simplex $\distributions$. We aim to understand this subset, and in particular the set of \emph{possible independent distributions} $\possindependent=\independent \cap\poss{\by}$.

\subsubsection{A representation of $\possindependent$}
\label{sec:representation}
We next prove that the set of possible independent distributions $\possindependent$ is formed by considering the set of all prime implicants of $\knowledge{\by}$. 
We find a useful representation of $\possindependent$ using \emph{cubical sets} (often called \emph{cubical complexes}). %
\citet{rothAlgebraicTopologicalMethods1958} was the first to use cubical sets to develop algorithms that compute efficient representations of boolean functions, noting the relation to implicants. 
Intuitively, a cubical set is a union of (hyper)cubes of various dimensions. In our representation, we use implicants to create a cube. We then show a cubical set formed from such cubes is the set of possible independent distributions. 

The cube associated with an implicant fixes the coordinates of the deterministic variables and uses the interval $[0, 1]$ for the free variables. For example, the implicants of the traffic light problem form two cubes: For $\neg r$, the cube is $\{0\}\times [0, 1]$ (or: the first is false, and the second is \say{agnostic}) and for $\neg g$, the cube is $[0, 1]\times \{0\}$. We next discuss the relevant background for these concepts.
\begin{definition}[Elementary cubes]
	\label{def:cube}
	An \emph{elementary interval} $I$ is $\{0\}$,  $\{1\}$, or $[0, 1]$. An \emph{(elementary) $n$-cube} $C$ is the Cartesian product of $n$ elementary intervals $C=I_1 \times \cdots \times I_n\subseteq [0, 1]^n$. We use \say{cube} to refer only to elementary cubes unless mentioned otherwise. The \emph{dimension} of a cube is the number of elementary intervals $I_i$ that are $[0, 1]$. Cubes of dimension 0 are called \emph{vertices} and are points in $\{0, 1\}^n$, while cubes of dimension 1 are \emph{edges} that connect two vertices.
	A \emph{face} $C'$ of a cube $C$ is a cube such that $C'\subseteq C$. 
 \end{definition}
 
\begin{definition}[Cubical sets]	
	\label{def:cubical}
	$X\subseteq [0, 1]^n$ is a \emph{cubical set} if it is the finite union of a set of cubes $\{C_1, ..., C_k\}$ \citep{kaczynskiComputationalHomology2004}. With $\mathcal{C}(X)$ we denote all faces of the cubes $\{C_1, ..., C_k\}$, while with $\mathcal{C}_k$ we denote the faces in $\mathcal{C}(X)$ of dimension $k$, called the \emph{$k$-cubes} of $X$. A \emph{facet} $C\in \mathcal{C}(X)$ of $X$ is a cube that is not contained in another cube $C'\in \mathcal{C}(X)$.  %
\end{definition}

Next, we need the notion of prime implicants \citep{quineCoresPrimeImplicants1959}. Informally, an implicant is a prime implicant if, by removing any of its assignments, there will be extensions of the implicant that are impossible worlds.  
\begin{definition}[Prime implicant]
	\label{def:prime-implicant}
	An implicant $\implicant$ of $\knowledge{\by}$ is a \emph{prime implicant} if its cover $\mathcal{W}_\implicant$ is not contained in the cover $\mathcal{W}_\altimplicant$ of another implicant $\altimplicant$, that is, $\mathcal{W}_\implicant \not\subset \mathcal{W}_\altimplicant$ for all implicants $\altimplicant$. With $\mathcal{I}=\{\bw_{D_i}\}_{i=1}^m$ we denote the set of all prime implicants of $\knowledge{\by}$.
\end{definition}
The set of prime implicants $\mathcal{I}$ can be found with the first step of the  Quine--McCluskey algorithm \citep{quineProblemSimplifyingTruth1952,mccluskeyMinimizationBooleanFunctions1956}. It creates a disjunctive normal form of $\knowledge{\by}$ by considering their disjunction. 

Finally, we introduce the cubical set corresponding to $\knowledge{\by}$.
\begin{definition}[Implicant cubes \& cubical set of $\knowledge{\by}$]
	\label{def:implicant-cube}
	Each implicant $\implicant$ defines an \emph{implicant cube} $\Pimplicant$: Its $i$-th elementary interval $I_i$ is $\{\implicant_i\}$ if $i\in D$, and $[0, 1]$ otherwise. The \emph{cubical set $\cubepossible$ of $\knowledge{\by}$} is the union of all prime implicant cubes $\cubepossible = \bigcup_{\implicant \in \mathcal{I}}\Pimplicant$.
\end{definition}
\begin{example}
	In the traffic light problem, the prime implicants are $\neg r$  and $\neg g$. $\neg r \wedge \neg g$ is an implicant but is not prime, as two proper subsets are also implicants.

	The cubical set is $\cubepossible=\{0\} \times [0, 1] \cup [0, 1] \times \{0\}$. $\mathcal{C}_0(\cubepossible)= \{(0, 0), (0, 1), (1, 0) \}$ is the vertices, while $\mathcal{C}_1(\cubepossible)=\{\{0 \} \times [0, 1], [0, 1] \times \{0\} \}$ are the edges. The first edge connects $(0, 0)$ and $(0, 1)$, and the second connects $(0, 0)$ and $(1, 0)$. This cubical set corresponds to the lines of minimal loss in the left plot of Figure \ref{fig:traffic-lights}.
\end{example}

The implicant cube $\Pimplicant$ contains the independent parameters $\probs$ for distributions $p_\probs$ that deterministically return $\implicant$. We present the basic properties of $\cubepossible$ in Appendix \ref{appendix:cubical}. The most important results are that the set of cubes $\mathcal{C}(\cubepossible)$ is equal to the set of all implicant cubes. Furthermore, the \emph{prime} implicant cubes are the facets of the cubical set $\cubepossible$. 
This means we can exactly compute the combinatorial structure of $\cubepossible$ from the prime implicants of $\knowledge{\by}$.

Our next result states that the cubical set $\cubepossible$ indeed represents the set of possible independent distributions.
\begin{theoremrep}[Representing the set of possible independent distributions]
	\label{prop:union}
	A parameter $\probs$ is in $\cubepossible$ if and only if the distribution $p_\probs$ is possible for $\knowledge{\by}$. That is, $\probs\in \cubepossible$ if and only if $p_\probs\in \possindependent$.
	Furthermore, the cubical set $\cubepossible$ cannot be represented as a union of fewer cubes.
\end{theoremrep}
\begin{proofsketch}
	A distribution $p_\probs$ using a parameter $\probs$ from implicant cube $\Pimplicant$ fixes $\implicant$ and allows any value in $[0, 1]$ for the remaining factors. This describes all possible independent distributions by Theorem \ref{thm:independence}, so all possible independent distributions are in the union of implicant cubes.

	Next, we show that a parameter $\probs$ that is in the open interval $(0, 1)$ for all stochastic variables of a prime implicant cube cannot be in another (prime) implicant cube. This is because $\mu$ would be in the \emph{relative interior} of $\Pimplicant$, and we know that the relative interiors of faces of a cubical set are disjoint \citep{kaczynskiComputationalHomology2004}. This shows that no smaller set of implicants gets us to $\possindependent$. 
\end{proofsketch}
\begin{appendixproof}
	Consider some $p_\probs\in \possindependent$. Then, by Theorem \ref{thm:independence}, the deterministic part $\altimplicant$ of $\probs$ is an implicant of $\knowledge{\by}$. Let $\implicant\in \mathcal{I}$ be any prime implicant that $\altimplicant$ is an extension of, which has to exist by construction. Clearly $\Paltimplicant \subseteq \Pimplicant$, and so $\probs \in \Pimplicant \subseteq \cubepossible$.
	Next, assume $\probs\in \cubepossible$ for some $\implicant \in \mathcal{I}$. $\implicant$ is an implicant of $\knowledge{\by}$, and so by Theorem \ref{thm:independence}, $p_\probs$ is possible for $\knowledge{\by}$, hence $p_\probs\in \possindependent$.

	Next, we prove that there is no smaller set of cubes than the set of prime implicant cubes that generate $\cubepossible$. %
	Associated with each implicant cube $\Pimplicant$, is an \emph{(elementary) implicant cell} $\cellimplicant\subseteq \Pimplicant$. $\cellimplicant$ is similarly defined as $\Pimplicant$, except it uses open intervals $(0, 1)$ instead of closed intervals $[0, 1]$. 
	Let $\implicant$ be a prime implicant and let $\probs \in \cellimplicant$. All cubes are equal to the union of cells inside it (\citet{kaczynskiComputationalHomology2004}, Proposition 2.15(v)). Since the different cells in a cubical complex are disjoint (\citet{kaczynskiComputationalHomology2004}, Proposition 2.15(iii)), $\probs$ is not in another implicant cube. 
\end{appendixproof}
While we can compute this representation, it can also be rather big. The number of prime implicants can be exponential in the number of variables, and there are formulas with $\Omega(3^n/n)$ prime implicants \citep{chandraNumberPrimeImplicants1978}, above the number of worlds $2^n$. And as we proved, we need \emph{all} prime implicants: A minimal subset of prime implicants that cover all possible worlds (for instance, the prime implicants found in the second step of the Quine--McCluskey method) does not always cover $\possindependent$. See Appendix \ref{appendix:minimal-cover} for a counterexample.
In addition, computing the combinatorial structure of the cubical set $\cubepossible$ adds a significant combinatorial overhead, as it is generated from the prime implicants.

\subsubsection{Convexity of semantic loss over independent distributions}
\label{sec:convexity}
Next, we study when the semantic loss restricted to independent distributions is convex.  We already saw in Figure \ref{fig:traffic-lights} that even for the simple traffic light formula, the set of possible independent distributions is not convex. We now prove this is almost always the case:
\begin{theoremrep}[Convexity]
	\label{thm:convex}
	The following statements are equivalent: 1) There is exactly one prime implicant of $\knowledge{\by}$;  2) $\cubepossible$ %
	is convex; 3) the semantic loss over the space of independent distributions $\mathcal{L}(\probs)$ is convex.
\end{theoremrep}
\begin{proofsketch}
	If there is exactly one prime implicant, $\cubepossible$ is an implicant cube $\Pimplicant$, which is clearly convex. If there is more than one prime implicant, we can construct a convex combination of two parameters that is not possible by noting that the deterministic assignment of this convex combination is not an implicant.

	The convexity of the semantic loss is proven using Jensen's inequality and noting that we can marginalise out all the stochastic variables. With more than one prime implicant, we note that since its minima $\cubepossible$ are non-convex, certainly the semantic loss must also be non-convex.
\end{proofsketch}
\begin{appendixproof}
	$1\leftrightarrow 2$ follows directly from \citet{kaczynskiComputationalHomology2004}, Proposition 2.80, by noting that the only rectangles in our setting are the elementary cubes in $[0, 1]$. Since there is no proof of Proposition 2.80 given in \citet{kaczynskiComputationalHomology2004}, we provide it for completeness’ sake. 

	$2\rightarrow 1$: Assume there is exactly one prime implicant $\implicant$ of $\knowledge{\by}$. Then $\cubepossible$ is described by $\Pimplicant$. This is an elementary cube, which is convex. 

	$1\rightarrow 2$: Next, assume there is more than one prime implicant. Consider two distinct prime implicants $\implicant, \altimplicant\in \mathcal{I}$. Consider $\probs_{\implicant} \in \Pimplicant \setminus \Paltimplicant$ and $\probs_{\altimplicant}\in \Paltimplicant\setminus \Pimplicant$, which have to exist by Proposition \ref{prop:union}. Consider $\probs$ to be any non-trivial convex combination of $\probs_\implicant$ and $\probs_\altimplicant$. Note that the deterministic assignment of $\probs$ is $\hat{D}=\{k\in D \cap E: \implicant_{k}=\altimplicant_{k}\}$. Since the prime implicants are different, at least one of the following needs to hold:
	\begin{enumerate}
		\item There is a $k\in D$ but $k\not\in E$. %
		Then $\varprob_k$ is stochastic, and $\hat{D}$ is a strict subset of $D$.  
		\item There is a $k\in D \cap E$ such that $\implicant_{k}\neq \altimplicant_{k}$. By the convex combination, $\varprob_k$ is stochastic, as it assigns probability mass to both $\implicant_k$ and $\altimplicant_k$. Therefore, $\hat{D}$ is a strict subset of $D$. %
	\end{enumerate}
	Since $D$ is a prime implicant, removing any element from $D$ results in a deterministic assignment that is no longer an implicant. Thus, by Theorem \ref{thm:independence}, $p_\probs$ is not possible. 

 \allowdisplaybreaks
	
	$2\rightarrow 3$: Assume there is exactly one prime implicant $\implicant$. Then note that the only possible worlds are the cover $\mathcal{W}_\implicant$. The cover contains all assignments to the stochastic variables, i.e., the variables not in $D$. This means we can safely marginalize those out: 
	\begin{align*}
		p_\probs(\knowledgevar) 
                =& \sum_{\bw \in \mathcal{W}_\implicant }  p_\probs(\bw) \\
                =& \sum_{\bw \in \mathcal{W}_\implicant } \prod_{i\in D} p_\probs(\implicant_i) \prod_{i\in\{1, ..., n\} \setminus D} p_\probs(\bw_i)\\
                =&\prod_{i\in D} p_\probs(\implicant_i)\sum_{\bw \in \mathcal{W}_\implicant }   \prod_{i\in\{1, ..., n\} \setminus D} p_\probs(\bw_i)\\
                =& \prod_{i\in D} p_\probs(\implicant_i)\sum_{\bw \in \mathcal{W}_\implicant }   p_\probs(\bw_{\{1, ..., n\} \setminus D}) \\
                =&\prod_{i\in D} p_\probs(\implicant_i)=p_\probs(\implicant).
	\end{align*}
	Given $\lambda\in (0, 1)$, $\probs_1, \probs_2\in [0,1]^n$, we define $\probs_\lambda=\lambda \probs_1 + (1-\lambda) \probs_2$ for brevity. Rewriting,  and using Jensen's inequality and some slightly laborious algebra, we find that
	\begin{align*}
		&\mathcal{L}(\probs_\lambda)= -\log p_{\probs_\lambda}(\knowledgevar) = -\log p_{\probs_\lambda}(\implicant) \\
		=& -\log \prod_{i\in D} p_{\prob_\lambda}({\bw_{D}}_i)\\
		=& -\sum_{i\in D} \log( {\prob_\lambda}_i^{\implicant_i} (1-{\prob_\lambda}_i)^{1-\implicant_i}) \\
		=& -\sum_{i\in D} \implicant_i \log {\prob_\lambda}_i + (1-\implicant_i)\log(1-{\prob_\lambda}_i)\\
		=&-\sum_{i\in D} \implicant_i \log (\lambda {\prob_1}_i + (1-\lambda){\prob_2}_i) \\
		&+ (1-\implicant_i)\log(1-(\lambda {\prob_1}_i + (1-\lambda){\prob_2}_i))\\
		=&-\sum_{i\in D} \implicant_i \log (\lambda {\prob_1}_i + (1-\lambda){\prob_2}_i) \\
		&+ (1-\implicant_i)\log(\lambda (1-{\prob_1}_i) + (1-\lambda)(1 - {\prob_2}_i)) \\
		\leq& -\sum_{i\in D} \implicant_i (\lambda \log {\prob_1}_i + (1-\lambda)\log {\prob_2}_i)\\ 
		& + (1-\implicant_i)(\lambda \log(1-{\prob_1}_i) + (1-\lambda)\log (1 - {\prob_2}_i))\\
		=&- \sum_{i\in D} \lambda \log {\prob_1}_i^{\implicant_i}(1-{\prob_1}_i)^{1-\implicant_i} \\
		&+ (1-\lambda)\log {\prob_2}_i^{\implicant_i}(1-{\prob_2}_i)^{1-\implicant_i}\\
		=&-\lambda \log p_{\probs_1}(\knowledgevar) - (1-\lambda)\log p_{\probs_2}(\knowledge{\by}) \\
		=& \lambda \mathcal{L}(\probs_1) + (1-\lambda)\mathcal{L}(\probs_2).
	\end{align*}
	
	$3\rightarrow 2$. Assume there is more than one prime implicant. Using $1\rightarrow 2$, this means $\cubepossible$ is non-convex. Therefore, there is a pair $\probs_1, \probs_2\in \cubepossible$ and $\lambda\in (0, 1)$ such that $\probs_\lambda=\lambda \probs_1 + (1-\lambda) \probs_2\not\in \cubepossible$. By Theorem \ref{prop:union}, $\mathcal(\probs_1)=\mathcal{L}(\probs_2)=0 < \mathcal{L}(\probs_\lambda)$, as $\cubepossible$ exactly describes the possible distributions where $\mathcal{L}(\probs)=0$. Therefore, $\mathcal{L}(\probs_\lambda) > \lambda \mathcal{L}(\probs_1) + (1-\lambda) \mathcal{L}(\probs_2)$, proving non-convexity. \qedhere

\end{appendixproof}
The condition that there is a single prime implicant means that the set of all possible worlds $\mathcal{W}_{\knowledge{\by}}$ is described by fixing some variables and letting the other variables be free. This is essentially \say{supervised learning} on the variables in $D$ and absolutely no supervision for the other variables. This is an uncommon scenario for most neurosymbolic settings, as we can simply resort to standard supervised learning methods. %

\subsubsection{Connectedness of $\possindependent$}
\label{sec:connectedness}
We next study when the set of all possible independent distributions $\possindependent$ is connected. For this, we introduce the notion of a \emph{prime implicant graph}:
\begin{definition}[Prime implicant graph]
	\label{def:prime-implicant-graph}
	Let $\mathcal{G}=(\mathcal{W}_{\knowledge{\by}}, \mathcal{E})$ be the \emph{prime implicant graph} of $\knowledge{\by}$, where the vertices $\mathcal{W}_{\knowledge{\by}}$ is the set of possible worlds of $\knowledge{\by}$ and $\mathcal{E}=\{(\bw_1, \bw_2)\mid \exists_{\implicant \in \mathcal{I}}: \bw_1, \bw_2 \in \mathcal{W}_\implicant \}$ is the set of edges. 
\end{definition}
In this graph, there is an edge between two possible worlds $\bw_1$ and $\bw_2$ when there is an implicant that covers both $\bw_1$ and $\bw_2$. In our traffic light example, the prime implicant graph has three vertices. There is an edge between the first and the third ($(0, 1)$ and $(0, 0)$) and the second and the third ($(1, 0)$ and $(0, 0)$). In the case of the XOR function (Appendix \ref{appendix:xor}) $(a \wedge \neg b) \vee (\neg a \wedge b)$, the graph has two vertices $(1, 0)$ and $(0, 1)$, but no edges.

\begin{theoremrep}[Connectedness]
	\label{thm:connected}
	The connected components of the space of possible distributions $\cubepossible$ correspond to the connected components in $\mathcal{G}$. %
	In particular, $\cubepossible$ is a connected space if and only if $\mathcal{G}$ is connected.
\end{theoremrep}
\begin{proofsketch}
	The vertices of the prime implicant graph $\mathcal{G}$ as points in $\{0, 1\}^n$ directly correspond to the vertices of $\cubepossible$. When an edge exists between $\bw_1$ and $\bw_2$, both worlds are covered by some prime implicant $\implicant$, and their deterministic components must include $\implicant$, so $\bw_1, \bw_2\in \Pimplicant$. Since all cubes are connected, $\bw_1$ and $\bw_2$ are connected in $\Pimplicant\subseteq\cubepossible$. By induction, if there is a path in $\mathcal{G}$ between $\bw_1$ and $\bw_2$, there is a path in $\cubepossible$ between $\bw_1$ and $\bw_2$.
\end{proofsketch}
\begin{appendixproof}
	By Proposition \ref{prop:vertices_are_worlds}, the vertices of $\cubepossible$ are precisely the possible worlds $\mathcal{W}_{\knowledge{\by}}$. We say $\bw_1, ..., \bw_n\in \mathcal{W}_{\knowledge{\by}}$ are \emph{edge-connected} if there exist edges $E_1, ..., E_n\in \mathcal{C}_1(\cubepossible)$ such that $\bw_i$ and $\bw_{i+1}$ are faces of $E_i$. 
	
	Consider an edge $(\bw_1, \bw_2)\in \mathcal{E}$. Then there is a prime implicant $\implicant$ that covers both $\bw_1$ and $\bw_2$. Therefore, $\bw_1$ and $\bw_2$ are both in $\Pimplicant$. Since $\Pimplicant$ is an (elementary) cube, by \citet{kaczynskiComputationalHomology2004}, Proposition 2.51.1, $\bw_1$ and $\bw_2$ are edge-connected. 
	
	Let $E\in \mathcal{C}_1(\cubepossible)$ be an edge with vertices $\bw_1$ and $\bw_2$. $E$ is a face of a prime implicant cube $\Pimplicant$ by Definition \ref{def:cubical}, and by Proposition \ref{prop:vertices_are_covers}, $\bw_1$ and $\bw_2$ are both in the cover $\mathcal{W}_\implicant$. Therefore, $(\bw_1, \bw_2)\in \mathcal{E}$.

	Combining these two results, we find that $\bw_1$ and $\bw_2$ are edge-connected if and only if there is an edge between them in $\mathcal{G}$. Therefore, by Theorem 2.55 and Corollary 2.57 of \citet{kaczynskiComputationalHomology2004}, the connected components of $\mathcal{G}$ and $\cubepossible$ coincide. \qedhere

\end{appendixproof}
This theorem shows that the connectedness depends on the structure of the constraint. For the traffic light example, $\possindependent$ is connected: the three possible worlds are connected through the two prime implicants $\neg r$ and $\neg g$. However, $\possindependent$ is disconnected for the XOR function, as its prime implicant graph is disconnected. See Appendix~\ref{appendix:xor} for a visualisation. The popular MNIST Addition task (Example \ref{ex:mnist-add}) is another example: Like XOR, $\possindependent$ is a set of disconnected vertices. This brings challenges to training independent perception models: Each disconnected part of the graph is a different \say{global optimum}, and moving from one global optimum to another will require a large change in parameters while incurring a higher loss.

\subsection{Computing the homology of $\possindependent$}
\label{sec:homology}
Our representation of the set of possible independent distributions is in the form of a cubical set. Cubical sets are a useful representation tool for topological spaces in algebraic topology, although simplicial complexes are more common \citep{hatcherAlgebraicTopology2002,matousekUsingBorsukUlam2008}. Homology allows us to use combinatorial, algebraic objects to study the topology of cubical sets. 
In our case, these objects correspond to the implicants of the formula. For finite cubical sets, the problem of computing the homology is solved \citep{kaczynskiComputationalHomology2004}. This means we can associate every formula $\knowledge{\by}$ with a homology which gives all the \emph{holes} in the set. This roughly tells us how \say{easy} this set is to traverse during optimisation: A set with many holes will require more complicated paths. We give an example of a formula with a hole in Appendix~\ref{sec:holes}.

The algorithm behind computing the homology from a cubical set is complicated, and involves both (abelian) group theory and linear algebra. We refer the reader to Algorithm 3.78 in \citet{kaczynskiComputationalHomology2004}, which requires the facets of the cubical set as input. In our case, this corresponds to the set of all prime implicants, found by the Quine--McCluskey algorithm \citep{quineProblemSimplifyingTruth1952}. Then we construct matrices that correspond to the boundaries of the cubes, and use linear algebra to compute the Smith normal form. From this matrix, the relevant groups can be constructed.

\section{Empirical visualisations}
\label{sec:empirical}
\begin{figure}
    \centering
    \includegraphics[width=\linewidth]{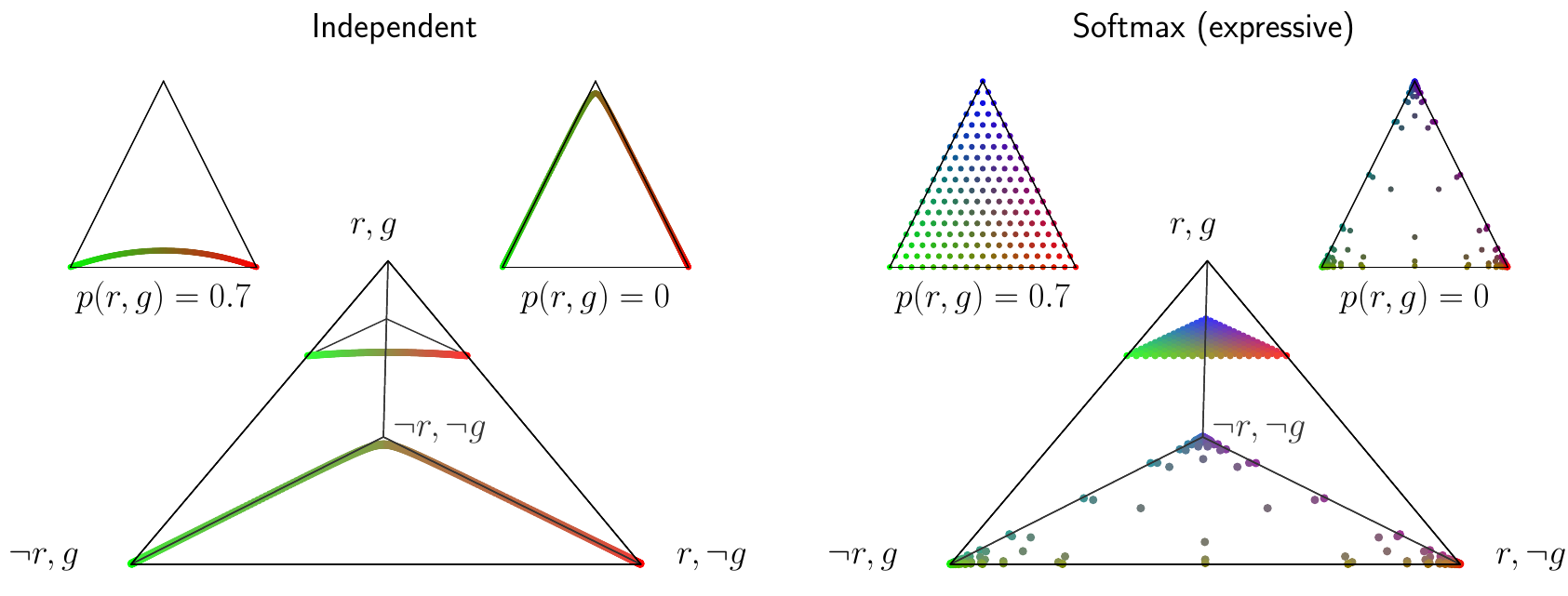}
    \caption{The minimisation of the semantic loss on the traffic light problem for independent distributions (left) and expressive distributions (right). The initial distributions have impossible beliefs with $p(r, g)=0.7$, plotted in the top-left triangle and the top triangle within the tetrahedron.  The resulting minima with $p(r, g)=0$ are in the bottom triangle. Minima of the independent assumption are as predicted by Theorem \ref{prop:union}. The minima of the expressive parameterisation cover differing areas in the bottom triangle, but are close to the vertices.}
    \label{fig:empirical-vis}
\end{figure}
To visualise what the possible distributions found by minimising the semantic loss look like, we compare independent distributions and expressive distributions on the traffic light problem in Figure \ref{fig:empirical-vis}. We modelled independent distributions with two real-valued parameters and a sigmoid, and expressive distributions with 4 real-valued parameters and a softmax. Then, we minimise the semantic loss to ensure $p(r, g)=0$. We use gradient descent for 10,000 iterations with a learning rate of 0.1. 

We find that the minima of the independent distributions are as predicted by Theorem \ref{prop:union}: A union of two line segments between the vertices $\neg r, g$ and $\neg r, \neg g$ and the vertices $r, \neg g$ and $\neg r, \neg g$. 
The minima of the expressive distributions are more uniformly distributed over the bottom triangle, but are biased towards the vertices. Therefore, using an expressive parameterisation is not sufficient to ensure the model is calibrated, which is a common problem in neural networks \citep{pmlr-v70-guo17a}. In Appendix \ref{appendix:regularisation}, we also experimented with adding an entropy maximisation loss, which counteracts this bias. This is similar to how BEARS \citep{marconato2024bears} encourages diversity. 

\section{Related work}
\label{sec:related-work}

Many PNL systems use the independence assumption we discussed, such as semantic loss \citep{xuSemanticLossFunction2018}, DeepProbLog \citep{manhaeveDeepProbLogNeuralProbabilistic2018}, DeepStochLog \citep{wintersDeepstochlogNeuralStochastic2022}, A-NeSI \citep{vankriekenANeSIScalableApproximate2023}, NeurASP \citep{yangNeurASPEmbracingNeural2020}, and Scallop \citep{huangScallopProbabilisticDeductive2021}.
As previously mentioned, this makes probabilistic reasoning tractable, as computing \cref{eq:wmc} is a \#P-hard problem in general.
While they are not directly comparable, fuzzy methods for neurosymbolic learning also implicitly make this assumption \citep{serafiniLogicTensorNetworks2016,badreddineLogicTensorNetworks2022,vankriekenAnalyzingDifferentiableFuzzy2022}. We show that several fuzzy logics also bias towards determinism in Appendix \ref{appendix:t-conorms}, although more work is needed to prove that this happens with the same generality as for probabilistic methods.

However, several recent methods are also compatible with more expressive distributions and show significant accuracy improvements compared to methods relying on the independence assumption. The pseudo-semantic loss \citep{ahmedPseudosemanticLossAutoregressive2023} uses a pseudo-log-likelihood approximation to the semantic loss to train autoregressive perception models. NeuPSL \citep{pryorNeuPSLNeuralProbabilistic2023} uses energy-based models that can perform joint inference over multiple variables. In
semantic probabilistic layers, \citet{ahmedSemanticProbabilisticLayers2022} experiment with an alternative parameterisation that increases expressivity without losing tractability: namely, a \emph{mixture} of independent distributions \citep{vergariCompositionalAtlasTractable2021}. 
We study these mixtures thoroughly in Appendix \ref{appendix:mixture}.
In particular, we prove that using two mixture components ensures the minima are connected. However, to be able to mix between an arbitrary number of possible worlds, we need at least as many components as the size of a \emph{minimal} prime implicant cover. This is exponential in the number of variables in general. 
BEARS \citep{marconato2024bears} increases expressiveness by creating an ensemble of independent models, which has the same expressiveness guarantees as mixtures of independent distributions. BEARS explicitly uses this ensemble to increase uncertainty calibration. \citet{ceruttiHandlingEpistemicAleatory2022} consider a Bayesian approach for probabilistic circuits, overcoming the independence assumption to improve the estimation of uncertainty.

The study of the theory of neurosymbolic learning is still in its infancy. \citet{marconatoNotAllNeuroSymbolic2023} discuss \emph{Reasoning Shortcuts}, which are perception models that minimise the semantic loss, yet learn to predict worlds that are different from the ground truth. Since the independence assumption biases to determinism, we hypothesise that independent distributions are more likely than expressive models to converge to a single reasoning shortcut: They cannot properly express uncertainty between different reasoning shortcuts. Several recent papers study how to best deal with reasoning shortcuts \citep{marconato2024bears,li2023learning}. 

Furthermore, recent work has studied conditions for the learnability of the perception model \citep{wangLearningLatentModels2023} and error bounds on its generalisation gap \citep{wangRegularizationInferenceLabel2023}. 
The output layer of the perception model also affects expressivity. If it is low-rank, that is, the number of neurons is lower than the number of outputs, there is an additional decrease in expressivity known as the softmax bottleneck \citep{yang2018breaking} or the sigmoid bottleneck in the context of binary outputs \citep{grivas2024taming}. Our results, in particular Theorem \ref{thm:connected}, are also related to the \emph{connectivity barrier} in Monte Carlo approaches to neurosymbolic learning over independent models \citep{liSoftenedSymbolGrounding2023}. A future study into this relation may provide insights in how to speed up Monte Carlo methods in this setting.

\section{Conclusion}
We studied the independence assumption in neurosymbolic learning, which characterises several popular methods. We proved that this assumption biases neurosymbolic models towards deterministic solutions. As a result, they lack the ability to express uncertainty about multiple possibly valid options. We then used tools from logic and computational homology to study the structure of the set of possible independent distributions, and showed it is non-convex and disconnected in general. 

In future work, we want to study practical methods for expressive neurosymbolic learning that properly represent uncertainty about different valid worlds. Dropping the independence assumption means that inference becomes much more complex, so a thorough study of appropriate (approximate) inference methods is needed. Our theory can be extended to a thorough study of the trade-off between expressivity and tractability. Our analysis of the mixture of independent distributions in Appendix \ref{appendix:mixture} gives a stepping stone towards this goal. Another option is to consider constraints on continuous variables. Furthermore, a thorough study of the homology discussed in Section \ref{sec:homology} may reveal further insights into the topology of the set of possible distributions.

\section*{Impact statement}
This paper presents foundational work to understand and advance the field of neurosymbolic machine learning. As such, there could be many potential societal consequences of applications of our work, none of which, we feel, can be easily predicted and specifically highlighted here.
\section*{Acknowledgements}
Emile van Krieken was funded by ELIAI (The Edinburgh Laboratory for Integrated Artificial Intelligence), EPSRC (grant no. EP/W002876/1).
Pasquale Minervini was partially funded by ELIAI (The Edinburgh Laboratory for Integrated Artificial Intelligence), EPSRC (grant no.\ EP/W002876/1), an industry grant from Cisco, and a donation from Accenture LLP.
Antonio Vergari was supported by the \say{UNREAL: Unified Reasoning Layer for Trustworthy ML} project (EP/Y023838/1) selected by the ERC and funded by UKRI EPSRC.
We thank Emanuele Marconato, Andreas Grivas, Patrick Koopmann, Thiviyan Thanapalasingam, Javaloy, Nicola Branchini, Leander Kurscheidt, Frank van Harmelen, Annette ten Teije, Eleonora Giunchiglia, Alessandro Daniele, Samy Badreddine, Siegfried Nijssen, Stefano Teso, and Sagar Malhotra for productive discussions and feedback while writing this work. We also thank the anonymous reviewers for their valuable feedback.

\bibliographystyle{plainnat}
\bibliography{references.bib,references_extra.bib}

\newpage

\appendix

\begin{table*}[!th]
	\centering
\begin{tabular}{lll}
\toprule
	Symbol & Description & Reference \\ \hline
	$\bw$ & A world in $\{0, 1\}^\Wdim$ & Sec.~\ref{sec:background} \\
	$\bx$ & An high-dimensional input & Sec.~\ref{sec:background} \\
	$\btheta$ & Parameter of the perception model & Sec.~\ref{sec:background} \\
	$\knowledge{\by}$ & A constraint of $\{0, 1\}^\Wdim\rightarrow \{0, 1\}$ & Sec.~\ref{sec:background} \\
	$p_\btheta(\bw|\bx)$ & The perception model & Sec.~\ref{sec:background} \\
	$\mathcal{L}(\btheta)$ & The semantic loss & Sec.~\ref{sec:background} \\
	$\mathcal{W}$ & The set of all worlds $\{0, 1\}^{n}$ & Sec.~\ref{sec:independence} \\
	$\mathcal{W}_{\knowledge{\by}}$ & The set of possible worlds & Sec.~\ref{sec:independence} \\
	$\distributions$ & The set of all distributions over worlds & Sec.~\ref{sec:expressiveness} \\
	$\distributions_\knowledge{\by}$ & The set of possible distributions & Sec.~\ref{sec:expressiveness} \\
	$\independent$ & The set of independent distributions  & Sec.~\ref{sec:geometric} \\
	$\possindependent$ & The set of possible independent distributions & Sec.~\ref{sec:geometric} \\
	$\probs$ & A parameter for an independent distribution in $[0, 1]^n$ & Sec.~\ref{sec:possible-distributions} \\
	$p_\probs(\bw)$ & An independent distribution parameterised by $\probs$ & Sec.~\ref{sec:possible-distributions} \\
	$\implicant$ & A partial assignment to the variables in $D$. Usually an implicant & Def.~\ref{def:detassign} and \ref{def:implicant} \\
	$\mathcal{W}_\implicant$ & The cover of a partial assignment (all worlds that extend it) & Def.~\ref{def:implicant} \\
	$\mathcal{I}$ & The set of all prime implicants & Def.~\ref{def:prime-implicant} \\
	$C$ & An (elementary) cube & Def.~\ref{def:cube} \\
	$\mathcal{C}(X)$ & The set of all cubes in a cubical set $X$ & Def.~\ref{def:cubical} \\
	$C_\implicant$ & The cube corresponding to an implicant & Def.~\ref{def:implicant-cube} \\
	$C_{\knowledge{\by}}$ & The cubical set of parameters in $[0, 1]^\Wdim$ that satisfy the knowledge $\knowledge{\by}$ & Def.~\ref{def:implicant-cube} \\
	$\mathcal{G}$ & The prime implicant graph & Def.~\ref{def:prime-implicant-graph} \\
 \bottomrule
\end{tabular}
\caption{\textbf{Table of notation used in the paper}.
We use bold symbols $\bx$, $\bw$ to denote vectors, both real and boolean. We use $p$ and $q$, possibly parameterised, to refer to distributions over $\bw\in \{0, 1\}^n$. Since these correspond to vertices in $\distributions$ (the simplex over all possible worlds), we will treat $p$ as both a vector and a distribution.}
\label{tab:notation}
\end{table*}
\clearpage

\newpage
\clearpage
\section{Bias towards determinism for Fuzzy Logic}
\label{appendix:t-conorms}
\begin{figure}[!t]
	\includegraphics[width=\linewidth]{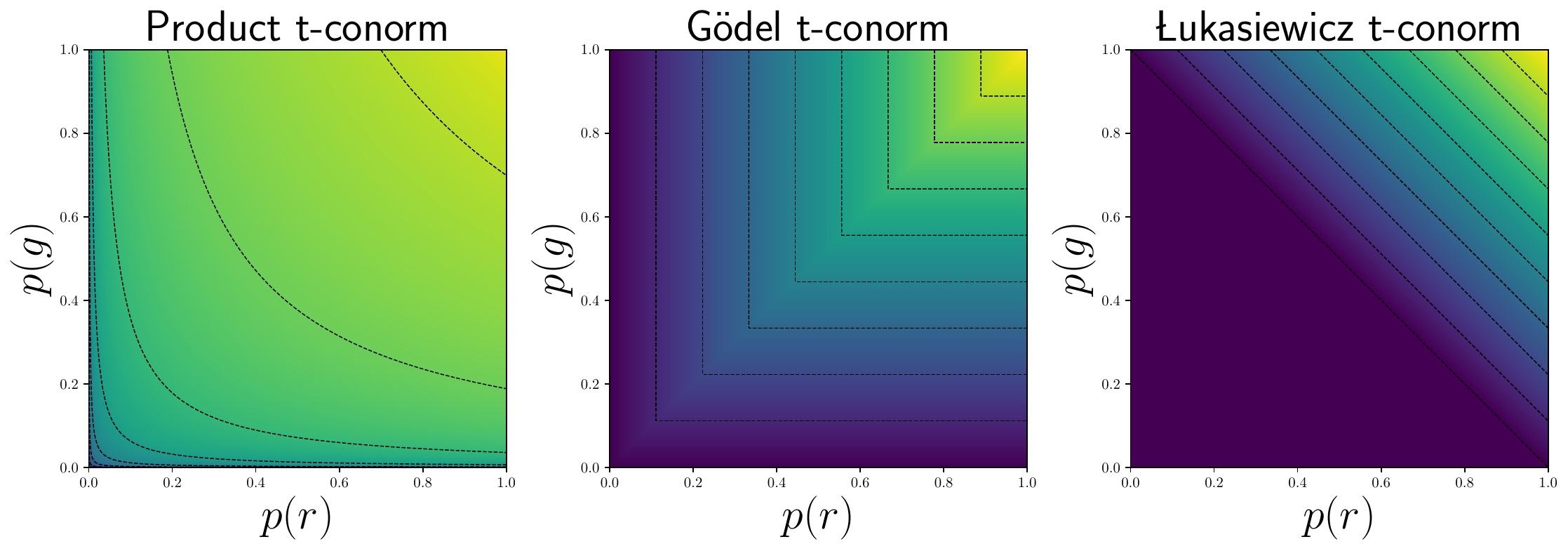}
	\caption{Plots of neurosymbolic loss functions for the formula $\neg r \vee \neg g$ using several t-norms. Left: Product t-norm, computed as $-\log p_\btheta(\knowledge{}|\bx)$. This coincides with the semantic loss. Center: The Gödel t-conorm $1 - \max(1-r, 1-g)$. Right: The \luk\ t-conorm $1 - \min(1, 2 - r-g)$.}
	\label{fig:traffic-lights}
\end{figure}

\textbf{Fuzzy Neurosymbolic Learning.} While our paper focuses on probabilistic methods, we shortly introduce relevant background about fuzzy neurosymbolic learning (FNL). Roughly, FNL methods construct a fuzzy evaluation function $e_\knowledge{\by}: [0, 1]^{\Wdim} \rightarrow [0, 1]$. $e_\knowledge{\by}$ maps independent probability distributions to fuzzy truth values in $[0, 1]$ by relaxing the logical connectives to operators on $[0, 1]$ \citep{badreddineLogicTensorNetworks2022}. If the distribution is deterministic, then the fuzzy truth becomes binary truth. For a discussion on fuzzy relaxations, see \citep{vankriekenAnalyzingDifferentiableFuzzy2022}. We limit our discussion to the three common fuzzy disjunctions (t-conorms):
\begin{align*}
	\text{Product: }  &a \vee b = a + b - a \cdot b, \\
	\text{Gödel: }  &a \vee b = \max(a, b), \\
	\text{\luk: } &a \vee b = \min(1,a + b)
\end{align*}
We plot the truth values of three common t-conorms for this formula in Figure \ref{fig:traffic-lights} as a function of $p(r)$ and $p(g)$. The product t-conorms and Gödel t-conorms have their minima at the lines $p(r)=0$ and $p(g)=0$, and have a similar biasing effect as the semantic loss. 

The \luk\ t-conorm is minimised when $p(r) + p(g)\leq 0.5$ and does not bias towards a deterministic choice. This may explain why \luk\ t-conorms are often more effective in realistic settings \citep{giunchigliaROADRAutonomousDriving2023}. However, the \luk\ logic has other problems, such as vanishing gradients \citep{vankriekenAnalyzingDifferentiableFuzzy2022} and the fact they do not converge to solutions where $p(\knowledgevar)=1$.

\section{Additional examples}
In this appendix, we plot several example formulas that illustrate the theory discussed in the paper.
\subsection{Minimal covers of prime implicants do not cover all possible independent distributions}
\label{appendix:minimal-cover}
\begin{figure}
	\centering
	\includegraphics[width=\linewidth]{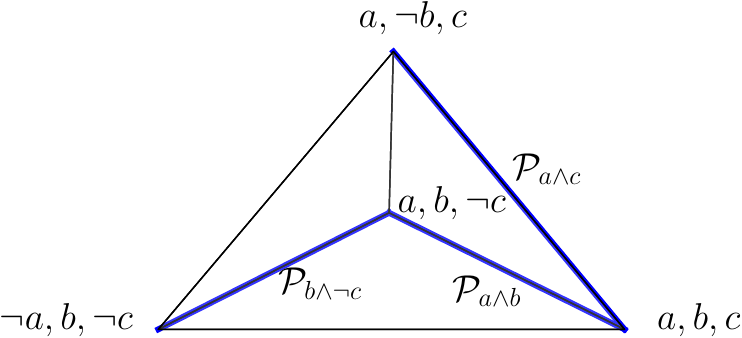}
	\caption{The full 3-simplex over possible worlds and the set of possible independent distributions in blue for the formula discussed in Section \ref{appendix:minimal-cover}. The $\mathcal{P}_{\phi}$ labels denote the set of distributions characterised by the implicant $\phi$, as defined in Proposition \ref{prop:union}.} 
	\label{fig:fourvars}
\end{figure}
In Proposition \ref{prop:union}, we showed that the set of \emph{all} prime implicants is necessary to cover all possible independent distributions. In this appendix, we give a counterexample to the idea that a \emph{minimal} cover of prime implicants might be sufficient to cover all possible independent distributions. Such minimal covers are computed in the second step of the Quine--McCluskey algorithm \citep{quineProblemSimplifyingTruth1952,mccluskeyMinimizationBooleanFunctions1956} to minimise the description length of the boolean formula. 

\begin{definition}
	A set of prime implicants $\mathcal{I}$ is a \emph{cover of $\knowledge{\by}$} if $\bigcup_{\implicant\in\mathcal{I}}\mathcal{W}_{\implicant}=\mathcal{W}_{\knowledge{\by}}$, that is, the union of their cover is equal to the set of all possible worlds. A cover is \emph{minimal} if no smaller set of prime implicants is also a cover.
\end{definition}

Consider a boolean formula on three variables with possible worlds $\{(a, b, c), (a, b, \neg c), (\neg a, b, \neg c), (a, \neg b, c)\}$. We visualize the full simplex over possible worlds and the set of possible independent distributions in Figure \ref{fig:fourvars}. The prime implicants of this formula are $\{b\wedge \neg c, a\wedge c, a\wedge b, \}$. The minimal cover of prime implicants is $\{b\wedge \neg c, a\wedge c\}$: The worlds $a\wedge b$ covers are $a\wedge b\wedge c$, which is also covered by prime implicant $a\wedge c$, and $a, b, \neg c$, which is also covered by prime implicant $b\wedge \neg c$. However, by Theorem \ref{thm:independence}, the distribution that deterministically assigns $a\wedge b$, but gives 0.5 probability to $c$, can be represented only using the prime implicant $a\wedge b$: The other two prime implicants cannot represent distributions where $c$ is stochastic. Therefore, minimal covers of prime implicants do not cover all possible independent distributions.

\subsection{The XOR formula has disconnected minima}
\label{appendix:xor}
\begin{figure}
	\centering
	\includegraphics[width=0.7\linewidth]{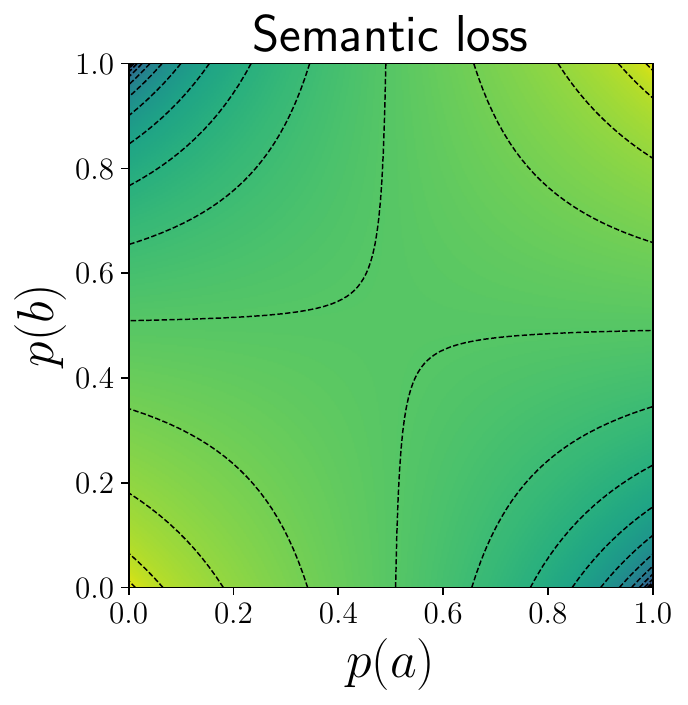}
	\caption{The loss landscape of the semantic loss under the independence assumption for the XOR formula $\varphi=(a\wedge \neg b) \vee (\neg a \wedge b)$.} 
	\label{fig:xor}
\end{figure}
In Figure \ref{fig:xor}, we plot the semantic loss under the independence assumption for the XOR formula $\varphi=(a\wedge \neg b) \vee (\neg a \wedge b)$. Note that the minima of this function are in $(1, 0)$ and $(0, 1)$, since the prime implicants are $a\wedge \neg b$ and $\neg a \wedge b$. These are clearly disconnected minima, and to move from one minimum to the other, we would have to traverse through the saddle point at $(0.5, 0.5)$, meanwhile incurring a significantly high loss.

\subsection{The set of possible independent distributions can have holes}
\label{sec:holes}
\begin{figure}
	\centering
\begin{tikzpicture}
        \draw[->] (-2,0.0,\lengthcube) -- (-1.3,0.0,\lengthcube) node[anchor=north east]{$\varprob_a$}; %
       \draw[->] (-2,0.0,\lengthcube) -- (-2,0.7,\lengthcube) node[below left]{$\varprob_c$}; %
       \draw[->] (-2,0.0,\lengthcube) -- (-2,0.0,\lengthcube-0.7) node[anchor=south]{$\varprob_b$}; %

        \draw (0, 0, 0)  -- (\lengthcube, 0, 0);
        \draw[blue,line width=0.25mm]  (\lengthcube,0,0) -- node[right] {$C_{a \wedge b}$} (\lengthcube,\lengthcube,0);
        \draw[blue,line width=0.25mm]  (0,\lengthcube,0) -- node[above] {$C_{b \wedge c}$} (\lengthcube,\lengthcube,0);
        \draw (0, 0, 0) --  (0, \lengthcube, 0);
        \draw (0,0,0) -- (0,0,\lengthcube) node[left] {$0, 0, 0$}; %
        \draw[blue,line width=0.25mm] (\lengthcube,0,0) -- node[right] {$C_{a\wedge \neg c}$} (\lengthcube,0,\lengthcube); %
        \draw (\lengthcube,\lengthcube,0)  node[right] {$1,1,1$} -- (\lengthcube,\lengthcube,\lengthcube); %
        \draw[blue,line width=0.25mm] (0,\lengthcube,0) -- node[left] {$C_{\neg a\wedge c}$} (0,\lengthcube,\lengthcube); %
        \draw[blue,line width=0.25mm] (0,0,\lengthcube) -- node[below] {$C_{\neg b\wedge \neg c}$} (\lengthcube,0,\lengthcube);
        \draw (\lengthcube,0,\lengthcube) -- (\lengthcube,\lengthcube,\lengthcube);
        \draw (\lengthcube,\lengthcube,\lengthcube) -- (0,\lengthcube,\lengthcube); 
        \draw[blue,line width=0.25mm] (0,\lengthcube,\lengthcube) -- node[left] {$C_{\neg a\wedge \neg b}$} (0,0,\lengthcube);
\end{tikzpicture}
\caption{An example of a formula where the set of possible independent distributions has a hole. The formula is $\knowledge{\by}=(\neg a \wedge \neg b) \vee (\neg a \wedge c) \vee (b\wedge c) \vee (a\wedge b) \vee (a\wedge \neg c) \vee (\neg b \wedge \neg c)$. The blue lines correspond to the set of possible independent distributions. $C_{\phi}$ is an implicant cube.}
\label{fig:hole}
\end{figure}
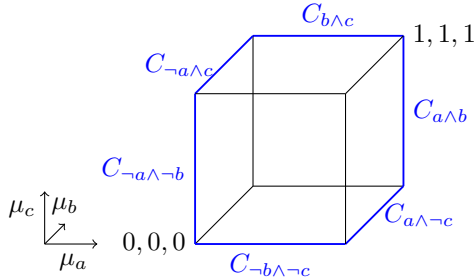

In Figure~\ref{fig:hole} we show that there are formulas for which the set of possible independent distributions $\possindependent$ has holes. Here, the formula $\knowledge{\by}=(\neg a \wedge \neg b) \vee (\neg a \wedge c) \vee (b\wedge c) \vee (a\wedge b) \vee (a\wedge \neg c) \vee (\neg b \wedge \neg c)$ is defined by a disjunction of prime implicants. We choose the prime implicants carefully so that each face of the cube has exactly 2 edges. The set highlighted in blue cannot be shrunk to a single point: It is a hole in space. This hole will be detected by algorithms that compute the homology of $\possindependent$ (see Section \ref{sec:homology}). How relevant is this to optimisation? The presence of holes means there is a cycle between different points, meaning we can move from one point to another in multiple ways. In our example, if one were to remove one of the prime implicants from the formula, there is only one path through $\possindependent$. 

\section{Entropy regularisation helps to calibrate expressive models}
\label{appendix:regularisation}
\begin{figure*}
	\centering
	\includegraphics[width=0.8\linewidth]{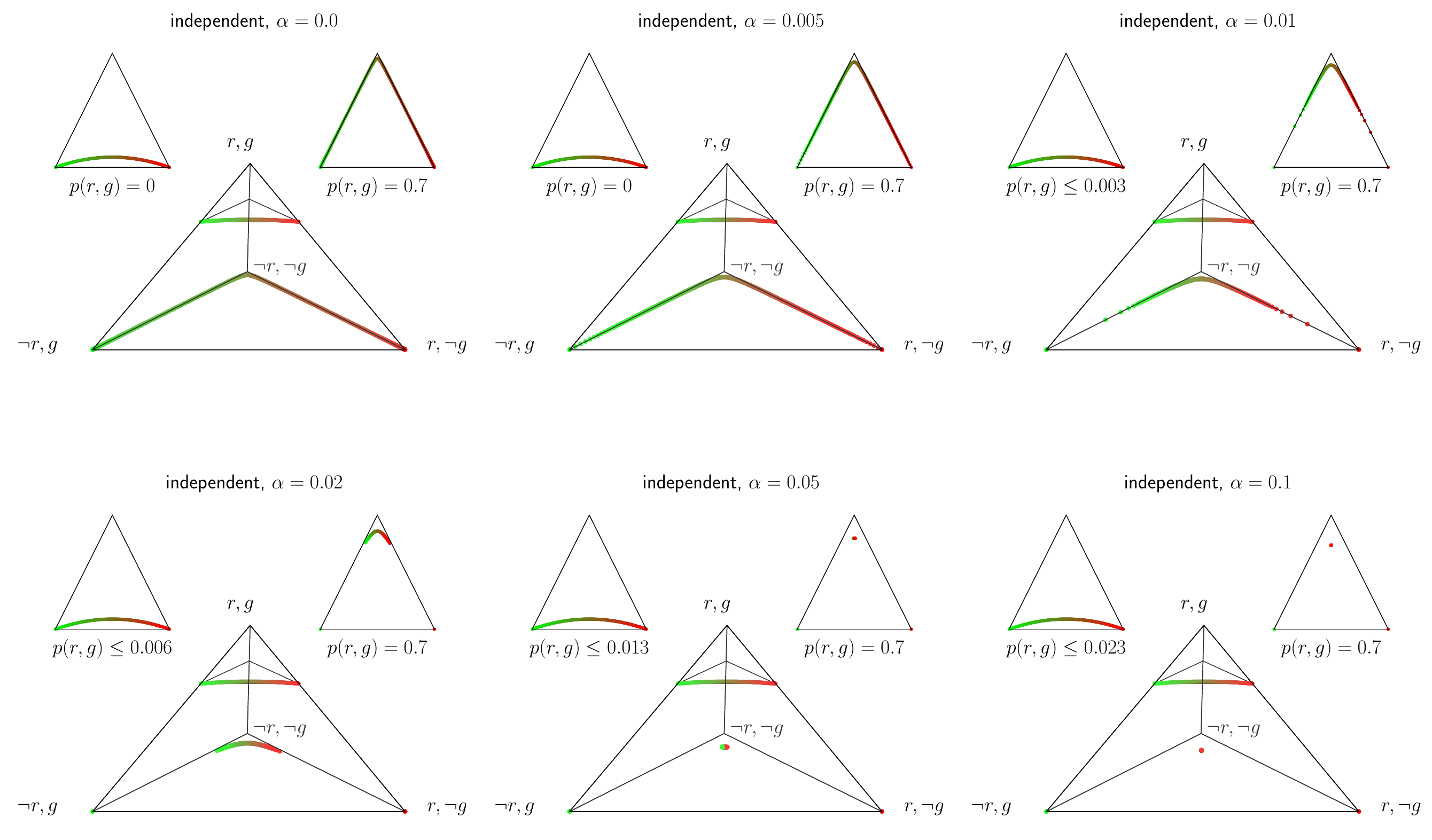}
	\caption{Minimising the semantic loss with entropy regularisation for the independent model.}
	\label{fig:entropy-independent}
\end{figure*}

We repeat the experiments in Section \ref{sec:empirical} for both the independent and the softmax model with entropy regularisation. We note that we \emph{maximise} entropy, instead of minimising the entropy, like in Neuro-Symbolic Entropy Regularisation \citep{ahmedNeuroSymbolicEntropyRegularization2022}. In particular, we use the loss function
\begin{equation}
	\mathcal{L}_\alpha(\btheta)=(1-\alpha)\mathcal{L}(\btheta)-\alpha H(p_\btheta|\knowledge{\by}),
\end{equation}
where we compute $H(p_\btheta|\knowledge{\by})=\frac{1}{3}(p_\btheta(\neg r, g) + p_\btheta(r, \neg g) + p_\btheta(\neg r, \neg g))$. 

We plot the results in Figure \ref{fig:entropy-independent} for the independent model and various values of $\alpha$. We see that the entropy regularisation does not help to calibrate the model. Rather, it biases the model more towards the $\neg r, \neg g$ vertex. For larger values of $\alpha$, the minimum of the augmented loss no longer is a minimum of the semantic loss. In fact, for $\alpha=0.1$, all initial points converge to a minimum that floats just above the bottom triangle. It assigns a probability of 0.023 to the impossible world $r, g$. 

The intuition for this is that the entropy regularisation is minimised only at the uniform belief $p_1$ from Figure \ref{fig:intro-fig}, which is not representable by independent distributions. Then, by minimising the augmented loss $\mathcal{L}_\alpha$, it trades off the closest point to $p_1$ and staying close to the bottom triangle. 

\begin{figure*}
	\centering
	\includegraphics[width=0.8\linewidth]{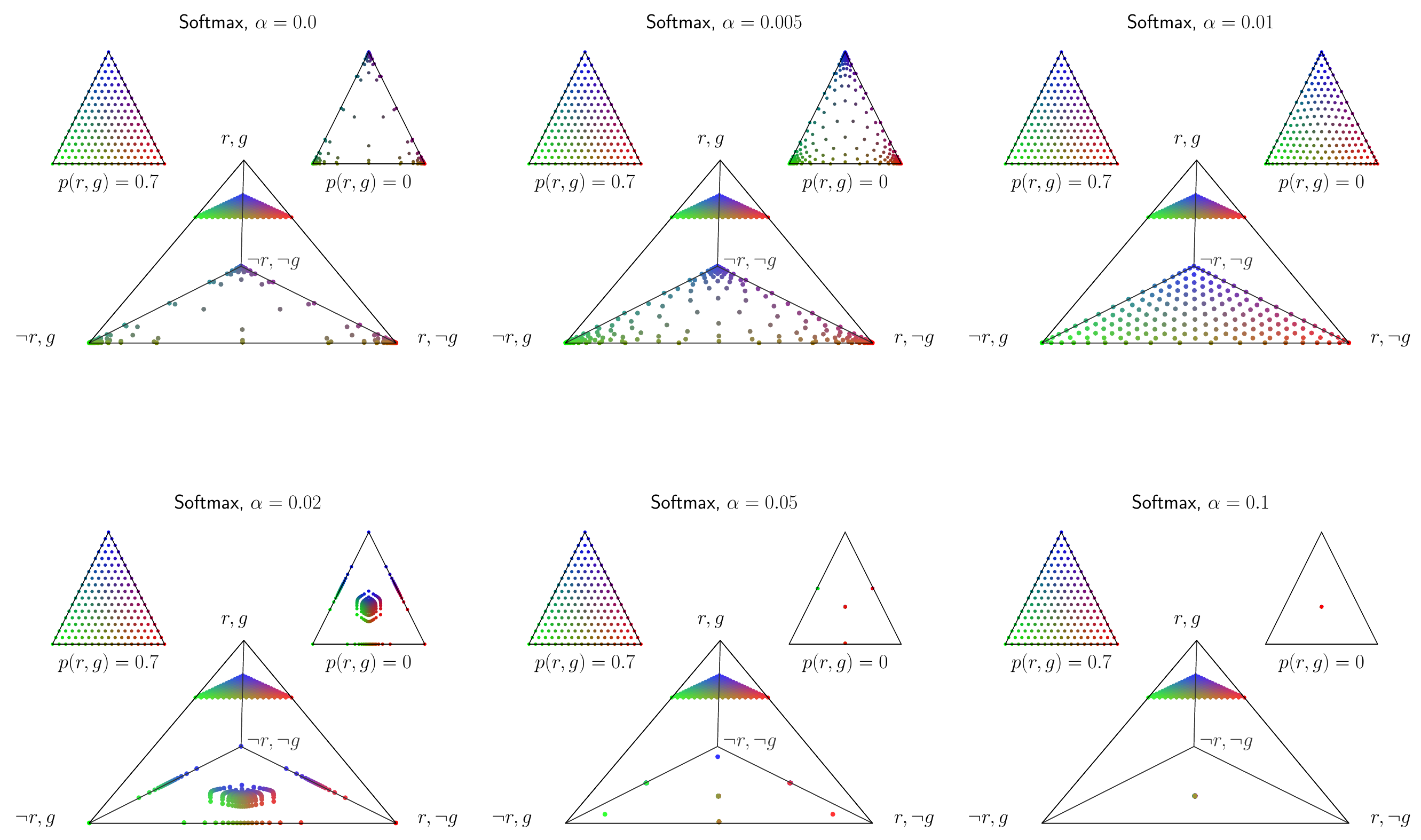}
	\caption{Minimising the semantic loss with entropy regularisation for the joint model.}
	\label{fig:entropy-joint}
\end{figure*}
For expressive distributions, the effect of entropy regularisation is quite different, which we plot in Figure \ref{fig:entropy-joint}. Again, the parameter $\alpha$ trades off the original minima, which are close to the vertices in the bottom triangle, to the uniform distribution $p_1$. For $\alpha=0.1$, the minima are indeed all close to $p_1$. However, for appropriate values of $\alpha$ such as $\alpha=0.01$, the minima almost distribute perfectly over the bottom triangle, and it almost finds the conditioned version of the original distribution. We note that finding the parameter $\alpha$ to get this behaviour would be extremely challenging in practice.

\section{The mixture of independent distributions}
\label{appendix:mixture}

In this appendix, we study the mixture of independent distributions with $k$ components. The main results here are that for $k\geq 2$, the space of possible distributions is connected. Furthermore, we provide bounds on the number of components needed to completely fill the space of all possible distributions $\poss{\by}$. A lower bound is the number of disconnected components in the prime implicant graph, and an upper bound is the number of prime implicants. This lower bound can be tricky: For example, the number of disconnected components in the MNIST Addition task is exponential in the number of digits.

The parameter space of this distribution is the Cartesian product $\Theta_k=\Delta^{k}\times[0, 1]^ {k\cdot n}$, where $\Delta^ k$ is the $k+1$-dimensional simplex. We map this parameter space to the space of distributions over worlds $\distributions$ with the map $f_{k\cdot \indep}:\Theta_k\rightarrow \distributions$, which we define as:
\begin{equation}
	\label{eq:indep-mixture}
	f_{k\cdot \indep}(\boldsymbol{\alpha}, \probs_1, ..., \probs_k)_i = \sum_{m=1}^k \alpha_m f_\indep(\probs_m)_i
\end{equation}
where $f_\indep$ is defined as in Equation \ref{eq:independentfunction}. Unlike $f_\indep$, this is not a bijection, as multiple parameterisations can map to the same distributions. We will refer to $p_\btheta=f_{k\cdot \indep}(\btheta)$ as a vector in $\distributions$.

\begin{lemma}
	\label{lemma:indep-mixture}
	Consider a parameter $\btheta \in \Theta_k$ of a mixture of $k$ independent components. Then $f_{k\cdot \indep}(\btheta)$ is a possible distribution if and only if all components $i$ such that $\alpha_i>0$ have a deterministic assignment that is an implicant.
\end{lemma}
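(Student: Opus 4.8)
The plan is to reduce this statement to Theorem \ref{thm:independence} by analysing the support of the mixture. The key observation is that a mixture distribution cannot exhibit cancellation: since $f_{k\cdot \indep}(\btheta)(\bw) = \sum_{m=1}^k \alpha_m p_{\probs_m}(\bw)$ is a sum of \emph{nonnegative} terms (each $\alpha_m \geq 0$ and each $p_{\probs_m}(\bw) \geq 0$), we have $f_{k\cdot \indep}(\btheta)(\bw) = 0$ if and only if every summand vanishes, i.e.\ for each $m$ either $\alpha_m = 0$ or $p_{\probs_m}(\bw) = 0$. Consequently the support of the mixture is exactly the union of the supports of its positively-weighted components:
\begin{equation}
	\mathrm{supp}\big(f_{k\cdot \indep}(\btheta)\big) = \bigcup_{m:\, \alpha_m > 0} \mathrm{supp}(p_{\probs_m}).
\end{equation}
This support identity is the entire content of the lemma, and establishing it cleanly is the only step requiring care; everything afterwards is a direct appeal to earlier results.

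Next I would recall, following the characterisation in Section \ref{sec:expressiveness}, that a distribution is possible for $\knowledge{\by}$ precisely when its support contains only possible worlds, i.e.\ when $\mathrm{supp}(\cdot) \subseteq \mathcal{W}_{\knowledge{\by}}$. Combining this with the support identity above, $f_{k\cdot \indep}(\btheta)$ is possible if and only if $\bigcup_{m:\, \alpha_m > 0} \mathrm{supp}(p_{\probs_m}) \subseteq \mathcal{W}_{\knowledge{\by}}$, which holds if and only if $\mathrm{supp}(p_{\probs_m}) \subseteq \mathcal{W}_{\knowledge{\by}}$ for every component $m$ with $\alpha_m > 0$. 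In other words, the mixture is possible exactly when each positively-weighted independent component $p_{\probs_m}$ is itself a possible independent distribution.

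Finally, I would apply Theorem \ref{thm:independence} component-wise: each independent distribution $p_{\probs_m}$ is possible for $\knowledge{\by}$ if and only if its deterministic assignment is an implicant of $\knowledge{\by}$. Chaining this equivalence with the reduction of the previous paragraph yields the claim, namely that $f_{k\cdot \indep}(\btheta)$ is possible if and only if every component $m$ with $\alpha_m > 0$ has a deterministic assignment that is an implicant. I do not anticipate a genuine obstacle here; the subtle point is simply to justify the union-of-supports equation without sloppiness (in particular, that no positive summand can be cancelled by the others), since this nonnegativity argument is what makes the mixture case no harder than the single-component case of Theorem \ref{thm:independence}. One should also note that the weights $\alpha_m = 0$ contribute nothing to the distribution and so impose no constraint, which is why the condition only ranges over components with $\alpha_m > 0$.
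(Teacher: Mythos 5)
Your proof is correct and follows essentially the same route as the paper's: both arguments rest on the observation that a nonnegatively-weighted sum cannot cancel, so the mixture places mass on an impossible world exactly when some positively-weighted component does, after which Theorem \ref{thm:independence} is applied to each component. Your version merely makes the union-of-supports identity explicit, which the paper's proof leaves implicit.
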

\begin{proof}
	The mixture of independents assigns some mass to all independent distributions with $\alpha_i > 0$. For such an independent distribution to be possible, by Proposition \ref{thm:independence}, the deterministic assignment of this component has to be an implicant. Conversely, assume there is a component with $\alpha_i > 0$ that is not a possible distribution. Then, it assigns some mass to an impossible world, and so must the mixture of components. Therefore, the mixture is not a possible distribution.
\end{proof}

Let us now discuss the set of possible distributions for the mixture distribution $\possmix{k}=f_{k\cdot \indep}(\Theta_k) \cap \poss{\by}$.
Conveniently, if $k\geq 2$, the set of minima of the semantic loss under the mixture distribution is connected:
\begin{proposition}
	The set of possible mixture distributions $\possmix{k}$ is connected for $k\geq 2$.
\end{proposition}
\begin{proof}
	Since $\possindependent \subseteq \possmix{k}$, any two points that are connected in $\possindependent$ are also connected in $\possindependent_k$.

	Consider two points $\probs_1, \probs_2\in \possindependent$ that are not connected. Then we can create a convex combination between $p_{\probs_1}$ and $p_{\probs_2}$ in $\possmix{k}$ by moving $\alpha_1$ from 1 to 0 and $\alpha_2$ from 0 to 1. This convex combination is a possible distribution in $\possmix{k}$ by Lemma \ref{lemma:indep-mixture}. 
	
	Consider two parameters $\btheta_1, \btheta_2 \in \Theta_k$. We will construct a path from $p_{\theta_1}$ to $p_{\theta_2}$ through $\possmix{k}$. First, choose a component $i\in \{1, ..., k\}$ such that $\alpha_{1, i}>0$. Next, continuously map the convex mixture parameter from $\boldsymbol{\alpha}_1$ to $\mathbf{e}_i$ (the $i$-th standard normal vector). Call the resulting parameter $\hat{\btheta}_1$. Since we do not change the mixture components themselves, we never leave $\possmix{k}$ by Lemma \ref{lemma:indep-mixture}. Then, $p_{\hat{\btheta}_1}=f_\independent(\probs_{1, i})=p_{\probs_{1, i}}$ is a possible independent distribution. Consider some component $j\in \{1, ..., k\}$ such that $\alpha{2, j}>0$. As argued in the paragraphs above, there is a path between $p_{\probs_{1, i}}$ and $p_{\probs_{2, j}}$ in $\possmix{k}$. Consider $\hat{\btheta}_2$ to be $\btheta_2$ but replacing $\boldsymbol{\alpha}$ with $\mathbf{e}_j$ such that $p_{\hat{\btheta}_2}=p_{\probs_{2, j}}$. Then, we continuously map $\mathbf{e}_j$ to $\alpha_{2, j}$ to finally arrive at $\btheta_2$. 
\end{proof}

Clearly, increasing the number of components is beneficial to further covering the complete set of all possible distributions $\poss{\by}$. But how parameter-efficient is the use of mixtures of independent distributions to cover this set?  First, we prove a straightforward lemma.

\begin{lemma}
	\label{lemma:mixture-cover}
	Consider a parameter $\btheta \in \Theta_k$ of a mixture of $k$ independent components such that $p_\btheta\in \possmix{k}$. Then ${p_\btheta}_i> 0$ if and only if there is a component $m\in \{1, ..., k\}$ such that $\alpha_m > 0$ and the deterministic assignment of $\probs_m$ covers $\bw$.
\end{lemma}
\begin{proof}
	Let ${p_\btheta}_i> 0$. Then by Equation \ref{eq:indep-mixture} there must be a $m\in \{0, ..., k\}$ with $\alpha_m > 0$ such that $f_\indep(\probs_m)_i > 0$. But then, by Proposition \ref{thm:independence}, the deterministic assignment of $\probs_m$ covers $\bw$. 

	Similarly, let the deterministic assignment of $\probs_m$ cover $\bw$ and let $\alpha_m > 0$. Then by Proposition \ref{thm:independence}, $f_\indep(\probs_m)_i > 0$. But then by Equation \ref{eq:indep-mixture}, ${p_\btheta}_i> 0$.
\end{proof}

We next prove a significant lower bound:

\begin{theorem}
	\label{thm:lower-bound-mixture}
	The minimal number of mixture components needed to assign some probability to all possible worlds is the number of prime implicants in a \emph{minimal cover} of prime implicants $\mathcal{I}$. 
\end{theorem}
\begin{proof}
	First, we prove that if a mixture distribution can assign some probability to all possible worlds, then it has at least $|\mathcal{I}|$ components. Assume otherwise. Then $|\mathcal{I}|-1$ components are enough to cover $\poss{\by}$. Consider some distribution $p\in \poss{\by}$ such that $p_i>0$ for all possible worlds $\bw_i$. Let $\btheta\in \Theta_{|\mathcal{I}|-1}$ be parameters such that $p_\btheta=p$, which have to exist by assumption. 
	
	Let $\mathcal{I}'$ be the implicants formed from the independent parameters $\probs_i$, $i\in \{1, ..., I-1\}$. By Lemma \ref{lemma:mixture-cover}, the set of worlds such that $p_\btheta(\bw) > 0$ is $\mathcal{W}'=\bigcap_{\implicant\in \mathcal{I}'} \mathcal{W}_{\implicant}$. This set must equal $\mathcal{W}_{\knowledge{\by}}$ by the assumption that $p_i > 0$ for all possible worlds. But this is a contradiction, since this would make the set of implicants $\mathcal{I}'$ a cover of $\knowledge{\by}$ with $|\mathcal{I}|-1$ components, which is smaller than the minimal cover of prime implicants $\mathcal{I}$.

	Next, we prove that if the number of components is at least $|\mathcal{I}|$, then we can assign some probability to all possible worlds. Define an order $\implicant_1, ..., \implicant_{|\mathcal{I}|}$ of a minimal cover of prime implicants. Let $\probs_1, ..., \probs_{|\mathcal{I}|}$ be independent parameters such that $\probs_i$ is in the relative interior of $\Pimplicant_i$ (see Theorem \ref{prop:union} and its proof for a rigorous definition). Use $\probs_1, ..., \probs_{|\mathcal{I}|}$ together with $\boldsymbol{\alpha}$ such that $\alpha_i>0$ for all $i\in \{0, ..., |\mathcal{I}|\}$ to define parameters $\btheta\in \Theta_{|\mathcal{I}|}$. Then, by Lemma \ref{lemma:mixture-cover}, $p_\btheta$ assigns some probability to all possible worlds.
\end{proof}

Interestingly, here a minimal cover of prime implicants is relevant, while for Theorem \ref{prop:union}, we needed to consider the set of \emph{all} prime implicants (see also Appendix \ref{appendix:minimal-cover}). Figure \ref{fig:fourvars} provides some intuition: $b\wedge \neg c$ and $a\wedge c$ form a minimal set of prime implicants. By mixing between points on the line segments $\mathcal{P}_{b\wedge \neg c}$ and $\mathcal{P}_{a\wedge c}$, we can, in fact, cover the entire set of possible worlds.

Clearly, $|\mathcal{I}|$ is also a lower bound on the number of components needed to completely cover $\knowledge{\by}$. A simple upper bound for the number of components needed to mix is $|\mathcal{W}_{\knowledge{\by}}|$, since we can put a (deterministic) independent distribution on each of the possible worlds and mix them via $\boldsymbol{\alpha}$. Another lower bound is $\lceil |\mathcal{W}_{\knowledge{\by}}| / (\Wdim + 1)\rceil$, since $f_{k, \indep}(\Theta_k)$ is at most a $k\cdot(\Wdim + 1)$-dimensional subspace of $\distributions$.

Given Theorem \ref{thm:lower-bound-mixture} and the other bounds, is using a mixture of independents a parameter-efficient way to allow perception models to express more uncertainty? We would argue not, at least not in general. For example, the size of the minimal cover for MNIST Addition grows exponentially with the number of digits considered, in fact, it is equal to the number of possible worlds. But then we are using $|\mathcal{W}_{\knowledge{\by}}|\cdot(\Wdim + 1)$ parameters, which are $\Wdim$ times more parameters than necessary: The space of possible distributions is a $|\mathcal{W}_{\knowledge{\by}}|$-dimensional subspace of $\distributions$.

\section{Convexity of semantic loss}
\label{appendix:convexity}
In this Appendix, we show that the semantic loss is a convex loss over the space of all possible distributions $\distributions$ using Jensen's inequality and the fact that the WMC in Equation \ref{eq:wmc} is linear. Note that this does not mean it is convex with respect to the parameters $\btheta$ of the perception model. Let $p_1, p_2\in \distributions$. Note that since $\distributions$ is a convex set, $\lambda p_1 + (1-\lambda) p_2 \in \distributions$. Then,
\begin{align*}
&\mathcal{L}(\lambda p_1 + (1-\lambda)p_2) \\
=& -\log \Big(\sum_{\bw\in\mathcal{W}_{\knowledge{\by}}}\lambda p_1(\bw) 
+ (1-\lambda) p_2(\bw)\Big) \\	
=& -\log \Big(\lambda \sum_{\bw\in\mathcal{W}_{\knowledge{\by}}} p_1(\bw) + (1-\lambda) \sum_{\bw\in\mathcal{W}_{\knowledge{\by}}} p_2(\bw)\Big) \\
\leq& -\lambda \log \sum_{\bw\in\mathcal{W}_{\knowledge{\by}}} p_1(\bw) - (1-\lambda) \log \sum_{\bw\in\mathcal{W}_{\knowledge{\by}}} p_2(\bw) \\
=& \lambda \mathcal{L}(p_1) + (1-\lambda) \mathcal{L}(p_2)
\end{align*}

\section{Cubical sets generated by prime implicants}
\label{appendix:cubical}

To help understand our results geometrically and prove some of the main theorems in Appendix \ref{appendix:proofs}, we study the basic properties of the cubical set $\cubepossible$. For background on polytopes, faces, face posets, and polyhedral complexes, see \citet{zieglerLecturesPolytopes1995}, and for an introduction and basic properties of cubical sets, see \citet{kaczynskiComputationalHomology2004}.

First, we define elementary cells, which allow us to access the relative interior of a cube by changing intervals from a closed set to an open set.
\begin{definition}
	Associated with each cube $C=I_1 \times ... \times I_n$ is an \emph{(elementary) cell} $\cell=\cellint_1 \times ... \times \cellint_n \subseteq C$, where each $\cellint_i=I_i$ for the degenerate intervals $[0, 0]$ and $[1, 1]$, and $\cellint_i = (0, 1)$ for the nondegenerate interval $[0, 1]$. 
\end{definition}

The following proposition allows us to associate implicants to faces of $\cubepossible$.
\begin{proposition}
	\label{prop:faces_are_implicants}
	The faces $\mathcal{C}(\cubepossible)$ of $\cubepossible$ is the set of implicant cubes.
\end{proposition}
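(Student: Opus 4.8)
The plan is to prove the set equality $\mathcal{C}(\cubepossible) = \{\Paltimplicant : \altimplicant \text{ an implicant of } \knowledge{\by}\}$ by a double inclusion, using a dictionary between partial assignments and elementary cubes. The central observation is that the vertices of an implicant cube $\Paltimplicant$ (its intersection with $\{0,1\}^\Wdim$) are exactly the cover $\mathcal{W}_\altimplicant$, since a vertex is a world that agrees with $\altimplicant$ on its fixed coordinates and takes an arbitrary value on the free ones. Because an elementary cube is determined by its vertex set, this yields the order-reversing equivalence $\Paltimplicant \subseteq \Pimplicant \iff \mathcal{W}_\altimplicant \subseteq \mathcal{W}_\implicant$: fixing more variables simultaneously shrinks both the cube and its cover. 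I would also use the basic fact that every face of an elementary cube arises by replacing some of its free intervals $[0,1]$ with a degenerate interval $\{0\}$ or $\{1\}$, so that faces of $\Pimplicant$ correspond precisely to the extensions $\altimplicant$ of $\implicant$ (partial assignments agreeing with $\implicant$ on its domain $D$ and fixing a superset of $D$).

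For the inclusion $\mathcal{C}(\cubepossible) \subseteq \{\text{implicant cubes}\}$, I would take a face $C'$ of a prime implicant cube $\Pimplicant$. By the dictionary, $C' = \Paltimplicant$ for some extension $\altimplicant$ of $\implicant$, whence $\mathcal{W}_\altimplicant \subseteq \mathcal{W}_\implicant$. Since $\implicant$ is an implicant, every world in $\mathcal{W}_\implicant$ is possible, and therefore so is every world in the smaller cover $\mathcal{W}_\altimplicant$; thus $\altimplicant$ is an implicant and $C'$ is an implicant cube.

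For the reverse inclusion, given any implicant $\altimplicant$ I would show that $\Paltimplicant$ is a face of some prime implicant cube. The key lemma is that every implicant extends a prime implicant: starting from $\altimplicant$, greedily delete fixed variables one at a time, each deletion enlarging the cover, and retain only those deletions that keep the partial assignment an implicant, stopping at an implicant $\implicant$ from which no further variable can be removed. Such an $\implicant$ is prime, since removing any variable produces a non-implicant. Then $\mathcal{W}_\altimplicant \subseteq \mathcal{W}_\implicant$, so by the dictionary $\Paltimplicant \subseteq \Pimplicant$ is a face of the prime implicant cube $\Pimplicant$, giving $\Paltimplicant \in \mathcal{C}(\cubepossible)$.

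The main obstacle is this reverse direction: justifying that the greedy deletion terminates at a genuine prime implicant and that primality, which is defined via strict cover containment $\mathcal{W}_\implicant \not\subset \mathcal{W}_\altimplicant$, coincides with domain-minimality of the partial assignment. Once the vertex/cover dictionary and the equivalence $\Paltimplicant \subseteq \Pimplicant \iff \mathcal{W}_\altimplicant \subseteq \mathcal{W}_\implicant$ are in place, both inclusions reduce to routine bookkeeping.
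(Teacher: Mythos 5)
Your proof is correct, and it diverges from the paper's in a way worth noting. For the inclusion $\mathcal{C}(\cubepossible)\subseteq\{\text{implicant cubes}\}$ the paper argues indirectly through the probabilistic machinery: it reads off the partial assignment from the degenerate intervals of a face and invokes Theorem~\ref{thm:independence} together with Theorem~\ref{prop:union} to conclude that a non-implicant assignment would put a non-possible distribution inside $\cubepossible$, a contradiction. You instead argue purely combinatorially via the vertex/cover dictionary: a face of $\Pimplicant$ is $\Paltimplicant$ for an extension $\altimplicant$ of $\implicant$, and $\mathcal{W}_\altimplicant\subseteq\mathcal{W}_\implicant$ forces $\altimplicant$ to be an implicant. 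Your route is self-contained and keeps this proposition logically independent of Theorem~\ref{prop:union}, which is arguably preferable since the appendix presents these cubical-set facts as supporting infrastructure. For the reverse inclusion both proofs reduce to the claim that every implicant extends a prime implicant; the paper asserts this ``by definition,'' whereas you actually supply the argument (greedy single-variable deletion on a finite domain, plus the observation that cover containment of partial assignments is equivalent to reverse domain containment with agreement, so domain-minimality coincides with the paper's cover-based notion of primality). That last reconciliation is the one step you flag as an obstacle, and it does need the two or three lines you sketch, but it goes through; your write-up fills a genuine gap in the paper's version rather than introducing one.
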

\begin{proof}
	Consider some face $X\in \mathcal{C}(\cubepossible)$. Since by definition a face is an (elementary) cube, it can be represented by $I_1 \times ... \times I_n$, where each $I_i$ is an elementary interval. Use the degenerate intervals to create a partial assignment $\implicant$. If $\implicant$ was not an implicant, then by Theorem \ref{thm:independence},  any $\probs\in \cellimplicant$ is not possible, which contradicts Theorem \ref{prop:union}. Therefore, $X$ is an implicant cube.

	Next, consider some implicant $\implicant$. By definition, there is a prime implicant $\altimplicant\subseteq \implicant$ that assigns to a subset of $\implicant$. Therefore, the only difference between the implicant cubes of $\implicant$ and $\altimplicant$ is that the latter has fewer nondegenerate intervals. Therefore, $C_\implicant \subseteq C_{\altimplicant}$, and so $C_\implicant$ is a face of $C_{\altimplicant}$. Therefore, $C_\implicant$ is a face of $\cubepossible$.
\end{proof}

\begin{proposition}
	The facets of $\cubepossible$ are the prime implicant cubes $\Pimplicant$.
\end{proposition}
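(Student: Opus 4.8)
The plan is to reduce the statement to the definition of prime implicant by showing that inclusion of implicant cubes exactly mirrors inclusion of covers. By the preceding proposition, $\mathcal{C}(\cubepossible)$ is the set of \emph{all} implicant cubes, and a facet is by definition a cube in $\mathcal{C}(\cubepossible)$ that is maximal under inclusion. So the task is to prove that $\Pimplicant$ is inclusion-maximal among the implicant cubes precisely when $\implicant$ is a prime implicant.

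First I would establish the key order lemma: for implicants $\implicant$ with domain $D$ and $\altimplicant$ with domain $E$, the three conditions $\Pimplicant \subseteq \Paltimplicant$, $\mathcal{W}_\implicant \subseteq \mathcal{W}_\altimplicant$, and ``$E\subseteq D$ and $\implicant$ agrees with $\altimplicant$ on $E$'' are all equivalent. Both equivalences come from the coordinatewise criterion for containment of products of intervals (and, identically, of subcubes of $\{0,1\}^n$): since $\Pimplicant$ fixes the coordinates indexed by $D$ to $\implicant$ and leaves the rest as $[0,1]$, an inclusion $\Pimplicant\subseteq\Paltimplicant$ forces, for every $i\in E$, the $i$-th interval of $\Pimplicant$ to be contained in the point $\{\altimplicant_i\}$, i.e. $i\in D$ and $\implicant_i=\altimplicant_i$; coordinates outside $E$ impose no constraint. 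The cover $\mathcal{W}_\implicant$ is the subcube of $\{0,1\}^n$ obtained by the very same fixing, so the identical criterion governs $\mathcal{W}_\implicant\subseteq\mathcal{W}_\altimplicant$. Along the way I would record that an implicant cube (respectively a cover) determines its implicant uniquely, so the map $\implicant\mapsto\Pimplicant$ is an order isomorphism onto $\mathcal{C}(\cubepossible)$, carrying the cover order to the cube order.

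Given the lemma, the conclusion is immediate. The cube $\Pimplicant$ is a facet iff there is no implicant $\altimplicant\neq\implicant$ with $\Pimplicant\subsetneq\Paltimplicant$, which by the order isomorphism is equivalent to the nonexistence of an implicant $\altimplicant$ with $\mathcal{W}_\implicant\subsetneq\mathcal{W}_\altimplicant$. But this last condition is verbatim the definition of $\implicant$ being a prime implicant ($\mathcal{W}_\implicant\not\subset\mathcal{W}_\altimplicant$ for all implicants $\altimplicant$). Hence the facets of $\cubepossible$ are exactly the prime implicant cubes $\Pimplicant$ with $\implicant\in\mathcal{I}$.

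The argument is mostly bookkeeping, and the one place to be careful is aligning strict with non-strict inclusions: I must verify that distinct implicants yield genuinely distinct cubes (and distinct covers), so that a \emph{proper} cube inclusion corresponds to a \emph{proper} cover inclusion. Only then do ``maximal cube'' and ``maximal cover'' denote the same notion of maximality, matching the strict non-containment in the definition of prime implicant; this is the main subtlety rather than a real obstacle.
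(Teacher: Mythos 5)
Your proof is correct, and its core idea coincides with the paper's: reduce facet-ness to maximality among implicant cubes and then invoke primality. The difference is in organization and completeness. The paper argues only one direction explicitly --- it takes a facet, assumes its implicant is not prime, removes a deterministic variable to obtain a strictly larger implicant cube, and derives a contradiction --- and it leaves the converse (that every prime implicant cube really is a facet, i.e.\ is not contained in any other implicant cube) implicit; it also silently identifies ``can drop a variable and stay an implicant'' with the cover-containment definition of primality. Your order lemma, showing that $\Pimplicant \subseteq \Paltimplicant$, $\mathcal{W}_\implicant \subseteq \mathcal{W}_\altimplicant$, and ``$\altimplicant$ is a restriction of $\implicant$'' are all equivalent and that $\implicant \mapsto \Pimplicant$ is an order isomorphism onto $\mathcal{C}(\cubepossible)$, supplies exactly the justification the paper's proof tacitly relies on and delivers both inclusions at once; your care about strict versus non-strict containment (injectivity of $\implicant \mapsto \Pimplicant$) is also a point the paper glosses over. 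So this is the same approach, executed more rigorously; what the paper's version buys is brevity, at the cost of an unproved converse.
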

\begin{proof}
	Consider some facet $X$ of $\cubepossible$. By Proposition \ref{prop:faces_are_implicants}, the deterministic part of $X$ is an implicant $\implicant$. Assume $\implicant$ is not a prime implicant. Then there is a deterministic variable $i$ that we can remove from $\implicant$ and still have an implicant $\altimplicant$. But then $\Paltimplicant\supset C$, with $C=$ being a face of $\Paltimplicant$, which is in contradiction with the assumption that $C$ is a facet.
\end{proof}

\begin{proposition}
	\label{prop:vertices_are_worlds}
	The vertices $\mathcal{C}_0(\cubepossible)$ of $\cubepossible$ is equal to the set of possible worlds $\mathcal{W}_{\knowledge{\by}}$.
\end{proposition}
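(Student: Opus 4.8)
The plan is to read off the vertices directly from Proposition~\ref{prop:faces_are_implicants}, which already identifies the faces of $\cubepossible$ with implicant cubes; I only need to intersect that characterisation with the dimension-$0$ stratum and match it to complete assignments. No new geometric machinery should be required.

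First I would note that, by Definition~\ref{def:cubical}, a vertex of $\cubepossible$ is a $0$-dimensional face, i.e. a face whose elementary cube has no nondegenerate interval $[0,1]$. By Proposition~\ref{prop:faces_are_implicants} every face of $\cubepossible$ is an implicant cube $\Pimplicant$, so a vertex is an implicant cube in which every interval is degenerate ($[0,0]$ or $[1,1]$). Such a cube fixes all $\Wdim$ coordinates, so the underlying implicant $\implicant$ is a \emph{complete} assignment with $D = \{1, \ldots, \Wdim\}$, singling out one point $\bw \in \{0,1\}^\Wdim$, and the cube is exactly the singleton $\{\bw\}$.

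Next I would invoke the defining property of implicants. For a complete assignment the cover $\mathcal{W}_\implicant$ is the singleton $\{\bw\}$, and since $\implicant$ is an implicant its cover contains only possible worlds; hence $\bw \in \mathcal{W}_{\knowledge{\by}}$, giving $\mathcal{C}_0(\cubepossible) \subseteq \mathcal{W}_{\knowledge{\by}}$. For the reverse inclusion, given any $\bw \in \mathcal{W}_{\knowledge{\by}}$ I would take the complete assignment fixing each variable to its value in $\bw$; its cover is $\{\bw\}$, which consists solely of a possible world, so it is an implicant, and its implicant cube is the $0$-cube $\{\bw\}$. By Proposition~\ref{prop:faces_are_implicants} this cube is a face of $\cubepossible$, and being $0$-dimensional it lies in $\mathcal{C}_0(\cubepossible)$. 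The two inclusions together yield the claim.

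I do not expect a genuine obstacle here: once Proposition~\ref{prop:faces_are_implicants} is available, the argument is bookkeeping that matches $0$-cubes to complete implicants and complete implicants to possible worlds. The only point meriting a little care is the forward direction, namely confirming that a fully degenerate implicant cube really corresponds to a single possible world rather than to some larger cover; this is dispatched by the observation that a complete assignment has a singleton cover, so the notion of ``implicant'' collapses exactly to ``possible world'' in dimension $0$.
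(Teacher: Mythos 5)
Your proof is correct and follows essentially the same route as the paper: both reduce the claim to Proposition~\ref{prop:faces_are_implicants}, identifying $0$-dimensional faces with implicant cubes having all coordinates fixed, i.e.\ complete assignments whose singleton cover must consist of a possible world. You are in fact slightly more careful than the paper's own proof, which states only the forward inclusion explicitly, whereas you spell out both inclusions.
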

\begin{proof}
	Let $\mathcal{C}_0(\cubepossible)\subseteq \{0, 1\}^n$ be the vertices of $\cubepossible$, which by Proposition \ref{prop:faces_are_implicants} is the implicant cubes with no stochastic variables, that is, it assigns a value to every variable and corresponds directly to a world. By the fact that it is an implicant, this world has to be possible, that is, $\mathcal{C}_0(\cubepossible) = |\mathcal{W}_{\knowledge{\by}}|$.
\end{proof}

\begin{proposition}
	\label{prop:vertices_are_covers}
	The vertices $\mathcal{C}_0(\Pimplicant)$ of (prime) implicant cube $\Pimplicant$ is equal to the cover of $\implicant$.
\end{proposition}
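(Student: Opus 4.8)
The plan is to unwind the two definitions involved and observe that they describe the same subset of $\{0,1\}^{\Wdim}$. Recall that the implicant cube is $\Pimplicant = I_1 \times \cdots \times I_{\Wdim}$, where $I_i = \{\implicant_i\}$ is a degenerate interval for each $i \in D$ and $I_i = [0,1]$ is the nondegenerate interval for each $i \notin D$. The vertices $\mathcal{C}_0(\Pimplicant)$ are, by Definition \ref{def:cubical}, the $0$-dimensional faces of this cube, that is, the faces in which every elementary interval is degenerate.

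First I would characterise these vertices explicitly. A face of an elementary cube is obtained by replacing each nondegenerate factor $[0,1]$ by $[0,1]$, $\{0\}$, or $\{1\}$, and its dimension is the number of factors left equal to $[0,1]$. To reach dimension $0$ we must therefore collapse every free coordinate $i \notin D$ to a point of $\{0,1\}$, while the coordinates $i \in D$ are already pinned to $\implicant_i$. Hence $\mathcal{C}_0(\Pimplicant)$ is exactly the set of points $\bw \in \{0,1\}^{\Wdim}$ with $w_i = \implicant_i$ for every $i \in D$ and $w_i \in \{0,1\}$ arbitrary for $i \notin D$.

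Finally I would compare this with the cover $\mathcal{W}_\implicant$, which by definition consists of all worlds $\bw \in \{0,1\}^{\Wdim}$ that agree with $\implicant$ on the variables indexed by $D$. This is verbatim the set just described, so $\mathcal{C}_0(\Pimplicant) = \mathcal{W}_\implicant$, and the argument applies unchanged when $\implicant$ is taken to be a prime implicant.

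Since the statement is essentially a definitional identity, I do not expect a genuine obstacle. The only point requiring a little care is justifying that the vertices of an elementary cube are precisely the points obtained by collapsing each free coordinate to an endpoint. I would either argue this directly from the definition of a face as a subcube $C' \subseteq C$, or cite the corresponding elementary fact about the face structure of cubical sets in \cite{kaczynskiComputationalHomology2004}.
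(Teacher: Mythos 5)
Your proof is correct, and it takes a different (more elementary) route than the paper's. The paper proves this in one line by reinterpreting the partial assignment $\implicant$ itself as a piece of knowledge --- a boolean function whose possible worlds are exactly the worlds agreeing with $\implicant$ on $D$ --- and then invoking Proposition \ref{prop:vertices_are_worlds} (vertices of the cubical set of a knowledge are its possible worlds), observing that the resulting cubical set is $\Pimplicant$ and its possible worlds are the cover $\mathcal{W}_\implicant$. You instead unwind the definitions directly: the $0$-dimensional faces of the elementary cube $\Pimplicant = I_1 \times \cdots \times I_n$ are the points $\bw \in \{0,1\}^n$ with $w_i \in I_i$ for all $i$, i.e.\ $w_i = \implicant_i$ for $i \in D$ and $w_i$ arbitrary otherwise, which is verbatim $\mathcal{W}_\implicant$. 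Your version is self-contained and avoids the mild sleight of hand of applying a proposition stated for $\knowledge{\by}$ to a partial assignment playing the role of knowledge; the paper's version buys brevity and reinforces the conceptual identification of implicants with restricted knowledge that runs through Appendix \ref{appendix:cubical}. The one point you flag --- that vertices of an elementary cube are exactly the points obtained by collapsing each free coordinate to an endpoint --- follows immediately from Definition \ref{def:cubical} (a face is an elementary cube $C' \subseteq C$, so each factor of $C'$ is an elementary interval contained in the corresponding factor of $C$), so there is no gap.
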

\begin{proof}
	Considering $\implicant$ as the constraint that a world $\bw$ has to agree on the deterministic variables with $\implicant$, by Proposition \ref{prop:vertices_are_worlds}, the vertices of $\Pimplicant$ are precisely such worlds. This is the cover of $\implicant$.
\end{proof}

\section{Proofs of the main theorems}
\label{appendix:proofs}
In this appendix, we give the proofs for the theorems in the main paper. Understanding some of these proofs requires understanding the connection of our problem to cubical sets, which we give in Appendix \ref{appendix:cubical}. We recommend going through Appendix \ref{appendix:cubical} before reading the proofs.

We start off by defining the transformation from independent parameters to distributions and prove that it is a bijection. First, let 
\begin{align}
	\label{eq:independentfunction}
	f_\indep(\probs)_i &= \prod_{j=1}^\Wdim \varprob_j^{w_{i, j}} \cdot (1-\varprob_j)^{1 - w_{i, j}}.
\end{align}
be a function $f_\indep: [0, 1]^n\rightarrow \independent$ that maps the parameters $\probs$ to the set of independent distributions. Note that this is the transformation used in Equation~\ref{eq:independentset}.
\begin{lemma}
	\label{lemma:bijection}
	The map $f_\indep$ is a continuous bijection from $[0, 1]^n$ to $\independent$\footnote{It is a bijection to $\independent$, but not to the codomain $\distributions$.}.
\end{lemma}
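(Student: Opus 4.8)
The plan is to verify the required properties separately: that $f_\indep$ lands in $\distributions$, that it is continuous, that it is surjective onto $\independent$, and that it is injective. Surjectivity is immediate, since $\independent$ is \emph{defined} as the image $f_\indep([0,1]^\Wdim)$, so the real work lies entirely in continuity and injectivity.

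For well-definedness and continuity, I would first observe that each coordinate $f_\indep(\probs)_i = \prod_{j=1}^\Wdim \varprob_j^{w_{i,j}}(1-\varprob_j)^{1 - w_{i,j}}$ is a polynomial in the entries of $\probs$, hence continuous, so $f_\indep$ is continuous as a map into $\mathbb{R}^{2^\Wdim}$. To check the image actually lies in the simplex $\distributions$, note each coordinate is a product of numbers in $[0,1]$ and is therefore nonnegative, while
$$\sum_{i=0}^{2^\Wdim - 1} f_\indep(\probs)_i = \prod_{j=1}^\Wdim \big(\varprob_j + (1 - \varprob_j)\big) = 1,$$
using that the sum of a fully factorised product over all $\bw \in \{0,1\}^\Wdim$ distributes into a product of per-variable sums.

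The key step is injectivity, which I would establish by exhibiting an explicit left inverse that recovers each parameter $\varprob_j$ as a \emph{linear} functional of the output vector. Concretely, the $j$-th marginal of $p_\probs$ is read off by summing the coordinates indexed by worlds whose $j$-th bit is on:
$$\sum_{i \,:\, w_{i,j} = 1} f_\indep(\probs)_i = \sum_{\bw \,:\, w_j = 1} \prod_{k=1}^\Wdim \varprob_k^{w_k}(1 - \varprob_k)^{1 - w_k} = \varprob_j \prod_{k \neq j} \big(\varprob_k + (1 - \varprob_k)\big) = \varprob_j.$$
Hence $\probs$ is fully determined by $f_\indep(\probs)$: if $f_\indep(\probs) = f_\indep(\probs')$, applying this functional for each $j$ gives $\varprob_j = \varprob_j'$ for all $j$, so $\probs = \probs'$. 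This proves $f_\indep$ is a bijection onto $\independent$.

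I do not expect a genuine obstacle; the one point to handle with care is the marginalisation identity, which hinges on the product factorising cleanly so that the unconstrained variables sum to one. As a closing remark, since the marginal-extraction map is linear and therefore continuous, the same computation yields a continuous inverse, so $f_\indep$ is in fact a homeomorphism onto $\independent$ (alternatively this follows since $[0,1]^\Wdim$ is compact and $\distributions$ is Hausdorff); but the lemma as stated asks only for a continuous bijection, which the argument above already delivers.
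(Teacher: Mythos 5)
Your proof is correct and follows essentially the same route as the paper's: both establish injectivity by exhibiting the explicit left inverse that recovers each $\varprob_j$ as the marginal $p(w_j=1)$, a linear functional of the coordinates of $f_\indep(\probs)$. You are somewhat more explicit than the paper about continuity, membership in $\distributions$, and the marginalisation identity, which only strengthens the argument.
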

\begin{proof}
	Define the function $f^{-1}_\indep: \independent\rightarrow [0, 1]^n$ as
	\begin{equation}
		f^{-1}_\indep(p)_i= p(w_i=1)= \sum_{j=1}^{|\mathcal{W}|}w_{j, i} p_j \quad i\in 1, ..., \Wdim.
	\end{equation}
	Consider $\probs \in [0, 1]^n$. Since $\probs$ are the parameters of an independent distribution, the marginal probability $p_\probs(w_i=1)=\mu_{i}$. This is also by definition the sum of the probabilities of all worlds $\bw_k$ with $w_{k, i}=1$, that is, $f^ {-1}_\indep$. Therefore, $f^{-1}_\indep(f_\indep(\probs))_i=\mu_i$. 

	Next, consider $p\in \independent$. By the definition of $\independent$ in Equation $\ref{eq:independentset}$, there must be a parameter $\probs\in [0, 1]^n$ such that $f(\probs)=p$, that is, $p$ represents an independent distribution by definition. Therefore, the marginal probabilities computed with $f^{-1}_\indep$ precisely describe $p$, and so $f_\indep(f^{-1}_\indep(p))=p$. 
\end{proof}

Next, we repeat the theorems from the main body of the text and give their proofs. 
\end{document}